\documentclass[11pt]{article} %
\usepackage[preprint]{acl}

\usepackage{times}
\usepackage{latexsym}

\usepackage[T1]{fontenc}

\usepackage[utf8]{inputenc}

\usepackage{inconsolata}

\usepackage{amsmath,amsfonts,bm}

\def\eqref#1{equation~\ref{#1}}

\def\1{\bm{1}}

\DeclareMathAlphabet{\mathsfit}{\encodingdefault}{\sfdefault}{m}{sl}
\SetMathAlphabet{\mathsfit}{bold}{\encodingdefault}{\sfdefault}{bx}{n}

\usepackage{microtype}
\usepackage{hyperref}
\usepackage{url}
\usepackage{booktabs}
\usepackage{graphicx}
\usepackage{subcaption}
\usepackage{booktabs}       %
\usepackage{amsfonts}       %
\usepackage{amsthm}
\usepackage{nicefrac}       %
\usepackage{xcolor}  
\usepackage{caption} 
\usepackage{wrapfig}
\usepackage{IEEEtrantools}
\usepackage{amsmath}
\usepackage{wrapfig}
\usepackage{lipsum}
\usepackage{placeins}
\usepackage{array}
\usepackage{IEEEtrantools}
\captionsetup[table]{skip=10pt}

\definecolor{darkblue}{rgb}{0, 0, 0.5}
\hypersetup{colorlinks=true, citecolor=darkblue, linkcolor=darkblue, urlcolor=darkblue}

\title{
Disentangling the Roles of Representation and Selection in Data Pruning}

\author{Yupei Du, Yingjin Song, Hugh Mee Wong, Daniil Ignatev, Albert Gatt, Dong Nguyen\\ 
Utrecht University, The Netherlands \\ 
\texttt{\{y.du, y.song5, h.m.wong, d.ignatev, a.gatt, d.p.nguyen\}@uu.nl} 
}

\newtheorem{theorem}{Theorem}[section]

\newtheorem{remark}[theorem]{Remark}
\newtheorem{corollary}[theorem]{Corollary}
\newtheorem{definition}[theorem]{Definition}

\begin{document}

\maketitle

\begin{abstract}
Data pruning—selecting small but impactful subsets—offers a promising way 
to efficiently scale NLP model training. 
However, existing methods often involve many different design choices, 
which have not been systematically studied. 
This limits future developments. 
In this work, we decompose data pruning into two key components:
the \textit{data representation} and  the \textit{selection algorithm}, 
and we systematically analyze their influence on the selection of instances. 
Our theoretical and empirical results highlight the crucial role of representations: 
better representations, e.g., training gradients, 
generally lead to a better selection of instances, regardless of the chosen selection algorithm.
Furthermore, different selection algorithms excel in different settings, and
none consistently outperforms the others. Moreover, 
the selection algorithms do not always align with their intended objectives: for example, 
algorithms designed for the same objective can select drastically different instances,
highlighting the need for careful evaluation.
\end{abstract}

\section{Introduction}\label{sec:intro}
A major drive of recent progress in NLP has been the scaling of training data, 
regarding both pretraining~\citep{kaplan2020scaling,hoffmann2022training,sardana2024beyond} 
and fine-tuning~\citep{zhang2024when}. 
However, recent studies have shown that 
by carefully selecting a small subset of the original dataset, 
a process known as \emph{data pruning}, 
one can train models of comparable or even better performance with much less data~\citep{
    sorscher2022beyond,du2023ftft,xia2024less}.

Different data pruning methods exist, involving various design choices. 
However, no existing work has systematically studied the influence of each choice, hindering future progress. 
Although these methods appear diverse, 
we decompose them into two key components:
(1) obtaining \emph{data representations}, usually from a \emph{reference model}, and 
(2) running a \emph{selection algorithm} using these representations. 
Moreover, while the specific steps of selection algorithms vary, 
they share common \emph{objectives}, such as maximizing the difficulty or diversity of the selected instances.
Distinguishing these two components allows us to study fundamental questions: 
\emph{which representations and selection objectives work better}, and 
\emph{whether selection algorithms indeed meet their objectives}.

In this paper, 
we conduct a comprehensive study to answer these questions 
through both a theoretical and empirical lens.
Our contributions are: 

\begin{enumerate}
    \item To study the impact of different design choices in existing data pruning methods, 
    we conduct a comprehensive review and identify two key components: 
    data representations and selection algorithms. 
    Moreover, we identify three common sources for representations: 
    \emph{training dynamics}, e.g., loss trajectory across epochs, 
    \emph{hidden states}, and \emph{gradients}; 
    and three common selection objectives: 
    maximizing \emph{difficulty}, \emph{diversity}, and \emph{relevance} to validation data 
    of the selected instances (\S\ref{sec:decoupling}). 

    \item To study which representations are more effective and why, 
    we first identify three key criteria that effective representations should satisfy. 
    We then theoretically analyze whether different representations meet these criteria. 
    Finally, on both a simple synthetic task and NLP task-specific fine-tuning, 
    we empirically validate that 
    the representations that are more useful in theory (i.e., meet more criteria), e.g., gradients, 
    are indeed more effective than others, e.g., hidden states (\S\ref{subsec:rep_reveal}).

    \item %
    We study which selection objectives are more effective 
    and find that no one clearly stands out: 
    which selection objective works better depends on the context.
    For example, maximizing relevance to validation data excels 
    when a substantial train-test distribution shift is present, 
    and maximizing difficulty works well with high data budgets. 
    Surprisingly, 
    \emph{representations are more influential than selection algorithms}: 
    when different selection algorithms use the same representation, 
    the overlap in selected instances is greater than 
    when the same selection algorithm is used with different representations 
    (\S\ref{subsec:properties_selection}).

    \item To gain insights into whether algorithms follow their intended objectives, 
    we visualize their instance selection, and assess the consistency 
    between the selections of different algorithms aiming for difficulty. 
    Surprisingly, our results suggest that these algorithms do not always align with their objective. 
    For instance, when maximizing difficulty, 
    they sometimes prefer instances that are correctly predicted and far from the decision boundary; 
    however, these are usually considered to be easier instances.
    Furthermore, the same objective of maximizing difficulty 
    can lead to drastically different selections (\S\ref{sec:empirical_study}).

\end{enumerate}

Our findings provide actionable insights for the development of data pruning methods. 
Future research should:
(1) develop scalable yet strong representations, and
(2) carefully assess whether selection algorithms follow their intended objective.\footnote{
    Our code is available at \url{https://github.com/nlpsoc/data_pruning_disentangle}.}

\section{Representation-selection decoupling}\label{sec:decoupling}

Given a \emph{data budget}, such as 30\% of the original dataset, 
data pruning methods aim to select an informative subset of the data. 
However, it remains unclear 
how different design choices of these methods impact their effectiveness, 
because previous studies typically treat them as cohesive units. 
To address this gap, we identify two key components in data pruning methods: 
first, obtaining \emph{representations} for each instance, 
such as hidden states or gradients, using a \emph{reference model}, 
either off-the-shelf or fine-tuned on the original dataset;
second, a \emph{selection algorithm} to choose a subset of the data 
guided by a \emph{selection objective}, 
such as maximizing the difficulty of the selected instances.
This selected subset is then used to train the \textit{main model}, 
which is the final model of interest.\footnote{
    We exclude methods that rely on prompting LLMs for quality scoring~\citep{
        sachdeva2024train,chen2024alpagasus,lu2024instag,liu2024what}, 
        as these approaches add complexity through heuristic prompts and often function as black boxes, 
        making their results difficult to interpret.}

\subsection{Commonly-used representations and selection objectives}\label{sec:common_choices}

\paragraph{Representations}

\emph{Training dynamics} are widely used as a source for extracting representations, 
especially in fine-tuning tasks. 
These include metrics such as the correctness of predictions across epochs~\citep{toneva2018an}, 
prediction probabilities of the correct class~\citep{swayamdipta-etal-2020-dataset,pmlr-v139-jiang21k}, 
training error norms~\citep{paul2021deep}, 
the number of layers required for correct classification~\citep{baldock2021deep}, 
and perplexity~\citep{moore-lewis-2010-intelligent,marion2023more,kwok2024dataset}.
Differently, \emph{hidden states} from pretrained language models 
are frequently used in pretraining scenarios~\citep{
    abbas2023semdedup,NEURIPS2023_a8f8cbd7},
because they can capture semantic information while being computationally efficient.
\emph{Gradients} are another common representation in fine-tuning. 
They are often used to estimate the impact of specific instances on model predictions, 
either through influence functions~\citep{pmlr-v70-koh17a,park2023trak} 
or training unrolling methods~\citep{NEURIPS2020_e6385d39,xia2024less}.
There are also a few methods that use text-based features, 
such as bag-of-words~\citep{canuto2018thorough}.

\paragraph{Selection objectives}

After obtaining representations, 
various objectives are used to guide the implementation of selection algorithms. 
One common objective is to maximize the \emph{difficulty} of selected instances, 
i.e., to select those that are harder for models to fit, as indicated by 
being more forgettable~\citep{toneva2018an}, 
having a lower prediction confidence~\citep{swayamdipta-etal-2020-dataset},
a higher loss~\citep{pmlr-v139-jiang21k,li-etal-2024-quantity}, 
more layers required for prediction~\citep{baldock2021deep},
a higher perplexity~\citep{kwok2024dataset}, 
a higher self-influence~\citep{thakkar-etal-2023-self}, 
and larger distances from prototypical examples~\citep{sorscher2022beyond}.
Another objective is to maximize \emph{diversity} in the selected data~\citep{
    carbonera2015density,carbonera2016novel,malhat2020new}.
For example, \citet{abbas2023semdedup} measure the similarity between instances 
and keep only one from each pair of highly similar instances, 
and \citet{Yang2024SmallToLargeS} randomly sample from different clusters of instances. 
Moreover, when specializing models, e.g., adapting a general model to the medical domain, 
it is common to maximize the \emph{relevance} of selected instances to validation data. 
For example, assuming the availability of a validation set, 
\citet{xia2024less} and \citet{engstrom2024dsdm} select 
the most influential training instances 
based on training unrolling methods and influence functions, respectively.

\subsection{Representative methods}\label{subsec:representative_methods}

\begin{table*}[h]
    \centering
    \renewcommand{\arraystretch}{0.9} %
    \resizebox{\textwidth}{!}{ %
        \small
        \begin{tabular}{@{} c m{11em} m{11em} m{12em} @{}}
            \toprule
            & \multicolumn{3}{c}{\textbf{Representations}} \\
            \cmidrule(lr){2-4}
            \textbf{Selection} & \textbf{Training dynamics} & \textbf{Hidden states} & \textbf{Gradients} \\
            \midrule
            Max. diversity & SmallToLarge (S2L) \newline \citep{Yang2024SmallToLargeS} & SemDedup \newline \citep{abbas2023semdedup} &  \\ 
            \midrule
            Max. difficulty & Hard-to-learn \newline \citep{
            swayamdipta-etal-2020-dataset, pmlr-v139-jiang21k, ince2023harnessing} & Prototypicality \newline \citep{sorscher2022beyond} \newline SemDedup \newline \citep{abbas2023semdedup} & Self-Influence (SI) \newline \citep{NEURIPS2020_1e14bfe2,bejan-etal-2023-make} \\ 
            \midrule
            Max. relevance &  &  & LESS \citep{xia2024less} \\
            \bottomrule
        \end{tabular}
    }
    \caption{Representative methods from \S\ref{subsec:representative_methods}, 
    categorized by their representations and selection objectives.}
    \label{tab:representative_methods}
\end{table*}

Having identified commonly used representations and selection objectives,
we focus on six representative methods (see Table~\ref{tab:representative_methods}, 
where we also include their corresponding representations and objectives).
These methods cover the most common representation types: 
training dynamics, hidden states, and gradients, 
as well as key selection objectives: 
maximizing difficulty, diversity, and relevance to validation data.
We provide an in-depth description of the methods in Appendix~\ref{app_sec:overview_dpm}.\footnote{
    Different methods were originally proposed for specific contexts. 
    \textbf{Our goal is not to invalidate them}, 
    but to offer additional insights into their components. 
}

\section{Deciphering the impact of different components on instance selection}\label{sec:deciphering} 

This section investigates how 
different data representations and selection algorithms 
influence the selection of training instances.
We first provide a theoretical analysis on  
how various representations differ in terms of the signals they encode 
(\S\ref{subsec:rep_reveal}). 
Clarifying these differences allows us to 
understand the fundamental benefits and limitations of different representations, 
without considering specific selection algorithms. 
Next, we empirically compare the instances selected with
different combinations of representation and selection algorithms, 
through experiments with both an interpretable classifier on a synthetic dataset, 
and the fine-tuning of language models for various tasks 
(\S\ref{subsec:properties_selection}).

\paragraph{Notation}

We denote the original training set with $N$ instances as 
$\mathcal{D} = \{(x_i, y_i)\}_{i=1}^N$. 
The selected subset of data is represented by $\mathcal{S} \subset \mathcal{D}$. 
We use $B$ to denote the data budget (e.g., $B = |\mathcal{S}| = 0.2N$). 
For a data point $(x_i, y_i)$ and a model $\mathcal{M}$, 
we use $p_{\mathcal{M}}(y_i|x_i)$ and $\ell_{\mathcal{M}}(x_i, y_i)$ to denote 
the model's prediction probability of the correct class/token and loss, 
$h_{\mathcal{M}}(x_i)$ to denote the last hidden state, 
i.e., before classifier or unembedding layer of $\mathcal{M}$. 
Moreover, we use $\nabla_\theta \ell_{\mathcal{M}}(x_i, y_i)$ to represent 
the gradient of a group of parameters $\theta$ of $\mathcal{M}$ w.r.t. $\ell_{\mathcal{M}}(x_i, y_i)$. 
When taking the training process of $T$ epochs into account,
we use $p_{\mathcal{M}}^{(t)}(y_i|x_i)$ and $\nabla_{\theta} \ell_{\mathcal{M}}^{(t)}(x_i, y_i)$
to denote $p_{\mathcal{M}}(y_i|x_i)$ and $\nabla_{\theta} \ell_{\mathcal{M}}(x_i, y_i)$ 
at epoch $t \leq T$, 
and $\eta_t$ to denote the average learning rate of the model in epoch $t$.

\subsection{Properties of representations}\label{subsec:rep_reveal}

Our analysis centers on one key quantity: 
the \emph{distance} between two instances, $i$ and $j$, based on different representations. 
Measuring instance distances allows for assessing 
how well representations group instances of shared attributes together. 
Indeed, these distances are central to most methods, enabling functionalities like 
clustering (e.g., S2L and Prototypicality), duplicate identification (e.g., SemDeDup), 
and instance relevance measurement (e.g., LESS). 
For example, 
the Prototypicality method first clusters instances, and then 
selects instances far from the centroids as ``difficult" examples. 
However, this raises an important question: 
\emph{are the clusters good enough in separating different instances?}

Specifically, we first investigate the criteria for good representations, 
including the \emph{information} representations should encode, 
and their \emph{discriminative power} for different inputs, 
i.e., how well they separate different types of instances. 
We then analyze whether different representations meet these criteria. 
To the best of our knowledge, we are the first to 
systematically compare different representations in the context of data pruning.

\paragraph{Setup}

For the simplicity of analysis,\footnote{
    Our analysis can be extended to multi-class classification or generation tasks 
    by considering the prediction of a specific class or token, 
    similar to~\citet{park2023trak}. 
} 
we consider a binary classification task with labels $y \in \{-1, +1\}$, 
optimized with binary cross-entropy loss, 
and focus on the \textbf{classification layer}. 
Formally, given two training instances $(x_i, y_i)$ and $(x_j, y_j)$, 
a model $\mathcal{M}$, and its classification layer $w$,
we study the squared Euclidean distances between these two instances, computed by 
hidden states $h_{\mathcal{M}}(\cdot)$, losses $\ell_{\mathcal{M}}(\cdot)$, 
and gradients $\nabla_w \ell_{\mathcal{M}}(\cdot)$. 
We denote them as $D_h$, $D_{\ell}$, and $D_g$ respectively.
Moreover, we use \textbf{hidden states} as the basis for our analysis, 
because they serve as inputs to the classification layer to compute other representations.\footnote{
    In other words, we treat hidden states as inputs throughout the analysis, 
    and compute quantities such as the distances between instances and the decision boundary based on them.
}

\paragraph{What makes a good representation?}

To select a minimal subset of training data while preserving generalization, 
we propose that the selections need to be 
\emph{(1) non-redundant}, ensuring minimality, and 
\emph{(2) diverse}, promoting generalization. 
Importantly, we argue that the redundancy and diversity here should be considered  
with respect to model training. 
Specifically, 
selected instances must be diverse enough to train a robust classifier $w$, 
while discarding less relevant examples. 
This entails retaining instances 
(1) \emph{close to the decision boundary}\footnote{
    The decision boundary of the final model, 
    which we approximate using that of the reference model. 
}, as those far away are 
either trivial (following the representer theorem~\citep{NEURIPS2018_8a7129b8}), 
or are mislabeled or rare outliers that destabilize training~\citep{2022PrioritizedTraining}: 
this helps select non-redundant instances; 
and (2) \emph{diverse} enough, 
as otherwise we are likely to obtain biased models, 
i.e., models that make predictions using a narrow set of rules~\citep{NEURIPS2023_a8f8cbd7}. 

We therefore argue that good representations should 
ensure that the distances between instances $D$ satisfy three key criteria.
First, $D$ should \emph{account for instances' distances to the decision boundary}.
Second, $D$ should \emph{contain instance label information}, 
to help selection algorithms balance samples across different labels.
Third, $D$ should \emph{be more discriminative for important instances}, 
e.g., those are closer to the decision boundary. 
This enables selection algorithms to preserve diversity among these important samples, 
by identifying their differences, while deprioritizing less relevant data.

\paragraph{Encoded information} 

We express gradients and losses as functions of hidden states and model parameters 
to study information encoded by different representations, 
and have the following result, for which the derivation can be found in Appendix~\ref{app_sec:proofs}.

\begin{remark}[Explicit expressions]{\label{remark:explicit}}
    Let $z_{*} = y_{*} w^T h_{\mathcal{M}}(x_{*})$ be the (signed and scaled) distance 
    from $h_{\mathcal{M}}(x_{*})$ to the decision boundary.
    We have
    $D_{\ell} = (\log{((1 + e^{-z_i}) / (1 + e^{-z_j})}))^2$, 
    $
    D_g = \|\frac{y_i h_{\mathcal{M}}(x_i)}{1+e^{z_i}} - \frac{y_j h_{\mathcal{M}}(x_j)}{1+e^{z_j}}\|_2^2
    = \frac{\|h_{\mathcal{M}}(x_i)\|_2^2}{(1+e^{z_i})^2} + \frac{\|h_{\mathcal{M}}(x_j)\|_2^2}{(1+e^{z_j})^2}
    - 2 \frac{y_i y_j h_{\mathcal{M}}(x_i)^T h_{\mathcal{M}}(x_j)}{(1+e^{z_i})(1+e^{z_j})}
    $.
\end{remark}

We make two key observations. 
First, compared to the distance between hidden states ($D_{h}$), 
the distance between losses ($D_{\ell}$) additionally 
integrates the distances to the decision boundary (i.e., $z_i$ and $z_j$). 
Second, gradients ($D_g$) further reflect label agreement. 
Specifically, $D_g$ is small when 
(1) instances are easy (i.e., $z_i$ and $z_j$ are large), increasing denominators; 
and (2) hidden states are similar when their labels agree, and vice versa, 
increasing the third term's numerator. 
These observations show that 
losses and gradients are stronger than hidden states, 
for identifying instances that are similar for the training process, 
because they encode instances' distances to the decision boundary. 
Furthermore, only gradients are label-aware. 

\paragraph{Discriminative power}

We examine the \emph{discriminative power} of the distance between two instances 
based on different representations: 
the more sensitive these distances are to the changes of inputs, i.e., hidden states here, 
the more discriminative they are. 
Specifically, we analyze how this discriminative power varies 
with an instance’s \emph{distance to the decision boundary}. 
Ideally, distances should be more discriminative for instances near the decision boundary, 
enabling data pruning methods to capture finer distinctions, 
while ignoring variations among instances further away, since they are less relevant.
To quantify this, 
we measure the \textbf{Jacobian magnitudes} of these distances w.r.t. hidden states.

Formally, let 
$J_{h_{\mathcal{M}}(x_i)}(D_h)$, $J_{h_{\mathcal{M}}(x_i)}(D_{\ell})$, 
and $J_{h_{\mathcal{M}}(x_i)}(D_g)$ 
be the Jacobian matrices of $D_h$, $D_{\ell}$, and $D_g$  
with respect to $h_{\mathcal{M}}(x_i)$. 
By our distance definition, for a given representation \(r(\cdot)\), 
the distance between two instances is defined as 
$D_r = \|r(x_i, y_i) - r(x_j, y_j)\|_2^2$.
We can then write the Jacobian of \(D_r\) with respect to \(h_{\mathcal{M}}(x_i)\) as
$$
\begin{aligned}
&J_{h_{\mathcal{M}}(x_i)}(D_r) 
=  \frac{\partial D_r}{\partial r(x_i, y_i)} 
   \frac{\partial r(x_i, y_i)}{\partial h_{\mathcal{M}}(x_i)} \\
&= 2 J_{h_{\mathcal{M}}(x_i)}\bigl(r(x_i, y_i)\bigr)^\top 
\bigl(r(x_i, y_i) - r(x_j, y_j)\bigr).
\end{aligned}
$$

Here we can see the Jacobian is influenced by the distance value through $r(x_i, y_i) - r(x_j, y_j)$.
However, our goal here is to quantify 
how inputs' distances to the decision boundary (based on hidden states) 
relate to $D_r$'s discriminative power, independent of the specific distance value. 
Therefore, \emph{we focus on $\mathbf{J_{h_{\mathcal{M}}(x_i)}\bigl(r(x_i, y_i)\bigr)}$}, 
and use the \textbf{spectral norm} to measure its magnitude. Formally:

\begin{definition}[Discriminative power]
    We define the discriminative power of losses and gradients 
    as the spectral norms of their Jacobian w.r.t. $h_{\mathcal{M}}(x_i)$:
    \begin{align*}
        \mathcal{C}_{\ell} &= \|J_{h_{\mathcal{M}}(x_i)}(\ell_{\mathcal{M}} (x_i, y_i))\|, \\
        \mathcal{C}_g &= \|J_{h_{\mathcal{M}}(x_i)}(\nabla_w \ell_{\mathcal{M}}(x_i, y_i))\|, 
    \end{align*}
    where $\|\cdot\|$ denotes the spectral norm.
    Analogously, we get $\mathcal{C}_h=1$. 
\end{definition}

Based on the above definitions we have the following results. 
Both proofs are in Appendix~\ref{app_sec:proofs}. 
\begin{theorem}[Region dependence]{\label{theorem:discriminative}}
    $\mathcal C_{\ell}$ and $\mathcal C_g$ are dependent on 
    inputs' distances to the decision boundary, satisfying
    $$
    \begin{aligned}
    \mathcal{C}_{\ell} &= \frac{\|w\|}{1+e^{z_i}} 
    = (1 - p_{\mathcal{M}}(y_i|x_i)) \|w\|, \text{ and } \\
    \mathcal{C}_g &\leq 
    \frac{1}{1+e^{z_i}}+\frac{e^{z_i}}{\left(1+e^{z_i}\right)^2}\|h_{\mathcal{M}}(x_i)\| \|w\|.
    \end{aligned}
    $$
\end{theorem}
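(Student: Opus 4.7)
The plan is to reduce everything to the scalar margin $z_i = y_i w^T h_{\mathcal{M}}(x_i)$, since under binary cross-entropy the per-instance loss has the closed form $\ell_{\mathcal{M}}(x_i,y_i) = \log(1+e^{-z_i})$ and $z_i$ depends linearly on $h_{\mathcal{M}}(x_i)$. Because the spectral norm of a (row) vector is just its Euclidean norm, computing $\mathcal{C}_{\ell}$ amounts to an application of the chain rule followed by identifying the sigmoid. Computing $\mathcal{C}_g$ requires differentiating a product of a vector-valued and scalar-valued function of $h_{\mathcal{M}}(x_i)$, yielding a matrix that splits into an identity-scaled part and a rank-one part; the bound then follows from the triangle inequality for spectral norms and the two simple facts $\|I\|=1$ and $\|uv^\top\|=\|u\|\|v\|$.

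\paragraph{Step 1: the loss case.} First I would write $\ell_{\mathcal{M}}(x_i,y_i) = \log(1+e^{-z_i})$ and apply the chain rule to obtain
\[
J_{h_{\mathcal{M}}(x_i)}(\ell_{\mathcal{M}}(x_i,y_i)) = \frac{\partial \ell}{\partial z_i}\,\frac{\partial z_i}{\partial h_{\mathcal{M}}(x_i)} = -\frac{1}{1+e^{z_i}}\, y_i w^\top.
\]
Taking the spectral norm (equal to the Euclidean norm for this $1\times d$ row vector) and using $y_i^2=1$ gives $\mathcal{C}_\ell = \|w\|/(1+e^{z_i})$. I would then invoke the identity $p_{\mathcal{M}}(y_i|x_i) = \sigma(z_i) = e^{z_i}/(1+e^{z_i})$, so that $1/(1+e^{z_i}) = 1 - p_{\mathcal{M}}(y_i|x_i)$, completing the loss case.

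\paragraph{Step 2: the gradient case.} Starting from Remark~\ref{remark:explicit}, the classifier-gradient is $\nabla_w \ell_{\mathcal{M}}(x_i,y_i) = -\,y_i h_{\mathcal{M}}(x_i)/(1+e^{z_i})$. Differentiating this vector-valued function of $h_{\mathcal{M}}(x_i)$ by the product rule, noting that $\partial z_i/\partial h_{\mathcal{M}}(x_i) = y_i w^\top$ and $\partial[(1+e^{z_i})^{-1}]/\partial z_i = -e^{z_i}/(1+e^{z_i})^2$, yields
\[
J_{h_{\mathcal{M}}(x_i)}(\nabla_w \ell_{\mathcal{M}}(x_i,y_i)) = -\frac{y_i}{1+e^{z_i}}\,I \;+\; \frac{e^{z_i}}{(1+e^{z_i})^2}\, h_{\mathcal{M}}(x_i)\, w^\top,
\]
using $y_i^2=1$ to absorb the sign on the second term. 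Finally, applying the triangle inequality for the spectral norm together with $\|I\|=1$ and the rank-one identity $\|h_{\mathcal{M}}(x_i)w^\top\| = \|h_{\mathcal{M}}(x_i)\|\|w\|$ delivers the stated upper bound on $\mathcal{C}_g$.

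\paragraph{Main obstacle.} The loss case is essentially mechanical. The subtle part is the gradient case: one must keep the identity/rank-one decomposition intact, resist the temptation to use a triangle inequality too early (before recognizing the rank-one structure), and be explicit that the bound is an inequality—the two terms in the Jacobian are not orthogonal in general, so the spectral norm equality is not available and we must settle for the subadditive bound. I would also note that the same steps go through if we replaced $h_{\mathcal{M}}(x_i)$ by any quantity on which the network is differentiable, which is what makes the statement genuinely a \emph{region-dependence} result via the sigmoid factors $1/(1+e^{z_i})$ and $e^{z_i}/(1+e^{z_i})^2$, both of which decay as $|z_i|\to\infty$.
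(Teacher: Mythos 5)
Your proof is correct and follows essentially the same route as the paper's: chain rule through the margin $z_i$ for $\mathcal{C}_\ell$, then a product-rule decomposition of the gradient's Jacobian into an identity-scaled term plus a rank-one term $h_{\mathcal{M}}(x_i)w^\top$, bounded via the triangle inequality with $\|I\|=1$ and $\|uv^\top\|=\|u\|\|v\|$. Your handling of the $y_i$ signs is in fact slightly more careful than the paper's own write-up, which silently drops $y_i$ (harmless since $y_i^2=1$), so nothing further is needed.
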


\begin{figure}[t]
    \centering
    \includegraphics[width=0.45\textwidth]{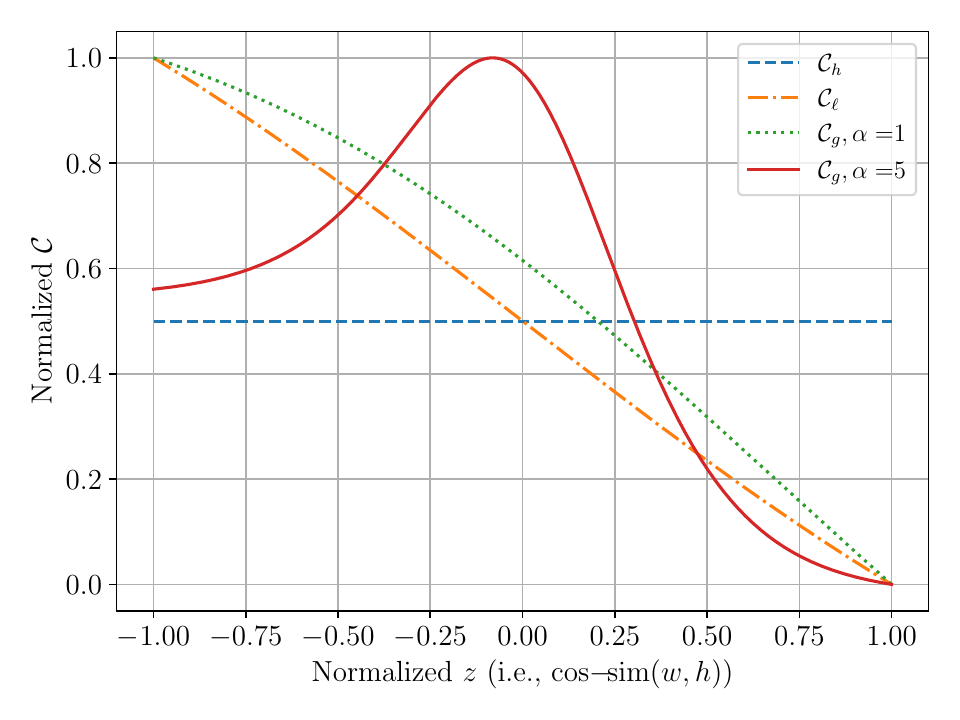}
    \caption{
        (Min-max normalized) discriminative power of the distance between instances, 
        computed by different representations: 
        the loss's discriminative power ($\mathcal{C}_{\ell}$) monotonically decreases, 
        while the gradient's ($\mathcal{C}_g$) 
        peaks near the decision boundary for large $\alpha$s. 
    }
    \label{fig:discriminative-power}
\end{figure}

\begin{corollary}{\label{corollary:gradient_peak}}
    Let $\alpha := \|w\| \|h_{\mathcal{M}}(x_i)\|$. 
    When $\alpha$ is smaller than the positive root of 
    $-x (1-e^{x}) = 1 + e^{x}$ (approximately 1.544), 
    $\mathcal{C}_g$ decreases monotonically as $z_i$ increases, 
    similar to $\mathcal{C}_{\ell}$.
    However, when $\alpha$ is larger, 
    $\mathcal{C}_g$ increases with $z_i$ for $z_i \leq \log\left(\frac{\alpha-1}{\alpha+1}\right)$, 
    and decreases for $z_i > \log\left(\frac{\alpha-1}{\alpha+1}\right)$.
\end{corollary}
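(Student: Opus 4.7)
The plan is to treat the upper bound on $\mathcal{C}_g$ supplied by Theorem~\ref{theorem:discriminative} as a function of $z_i$ alone, with $\alpha := \|w\|\|h_{\mathcal{M}}(x_i)\|$ held fixed, and to determine where its critical point sits relative to the admissible range of $z_i$. Let $g(z) := \frac{1}{1+e^z} + \frac{\alpha e^z}{(1+e^z)^2}$. A direct differentiation, using $\frac{d}{dz}[(1+e^z)^{-1}] = -e^z/(1+e^z)^2$ and $\frac{d}{dz}[e^z/(1+e^z)^2] = e^z(1-e^z)/(1+e^z)^3$, yields $g'(z) = \frac{e^z}{(1+e^z)^3}\bigl[(\alpha-1) - (\alpha+1)e^z\bigr]$, which vanishes only at $z^{\ast} = \log\frac{\alpha-1}{\alpha+1}$, and only when $\alpha > 1$. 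For $\alpha \leq 1$ the bracket is strictly negative, so $g$ is monotonically decreasing on all of $\mathbb{R}$ and the corollary is immediate in that sub-case.

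The crucial observation, and the one that actually pins down the transcendental threshold, is that $z_i = y_i w^{\top} h_{\mathcal{M}}(x_i)$ with $|y_i| = 1$, so Cauchy--Schwarz forces $|z_i| \leq \|w\|\|h_{\mathcal{M}}(x_i)\| = \alpha$. The admissible domain for $z_i$ is therefore $[-\alpha, \alpha]$. When $\alpha > 1$ the candidate $z^{\ast}$ is always negative, so $z^{\ast} < \alpha$ holds automatically; the peak lies inside the admissible range precisely when $z^{\ast} > -\alpha$. I would then rearrange $\log\frac{\alpha-1}{\alpha+1} > -\alpha$ as $(\alpha-1)e^{\alpha} > \alpha + 1$, and equivalently as $-\alpha(1-e^{\alpha}) > 1 + e^{\alpha}$; the boundary case reproduces exactly the transcendental equation stated in the corollary.

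A short auxiliary argument establishes the existence and uniqueness of $\alpha^{\ast}$. Let $\phi(x) := x(e^x - 1) - (1+e^x)$. Then $\phi(0) = -2$, $\phi(x) \to +\infty$ as $x \to \infty$, and $\phi'(x) = xe^x - 1$ changes sign exactly once on $(0,\infty)$, so $\phi$ is U-shaped and has a unique positive root $\alpha^{\ast}$; a numerical check locates it near $1.544$. Combining the pieces gives the corollary: when $\alpha \leq \alpha^{\ast}$, either $\alpha \leq 1$ (no critical point at all) or $1 < \alpha \leq \alpha^{\ast}$ with $z^{\ast} \leq -\alpha$ lying outside $[-\alpha, \alpha]$, so $g'(z) < 0$ throughout the admissible range and $g$ is monotonically decreasing, matching the behaviour of $\mathcal{C}_{\ell}$; when $\alpha > \alpha^{\ast}$, $z^{\ast} \in (-\alpha, 0) \subset (-\alpha,\alpha)$, and the sign pattern of $g'$ gives the claimed increase on $[-\alpha, z^{\ast}]$ followed by decrease on $[z^{\ast}, \alpha]$.

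The one genuinely non-routine step is recognising the Cauchy--Schwarz constraint on $z_i$: a purely algebraic analysis over $z \in \mathbb{R}$ would predict a threshold at the naive value $\alpha = 1$, and only after restricting to $[-\alpha, \alpha]$ does the transcendental value $\alpha^{\ast} \approx 1.544$ emerge. Everything else is elementary single-variable calculus, so I expect the Cauchy--Schwarz observation to be the only place where the argument requires insight beyond mechanical differentiation.
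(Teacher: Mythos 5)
Your proposal is correct and follows essentially the same route as the paper's proof: differentiate the Theorem~\ref{theorem:discriminative} bound in $z_i$, obtain the factor $(\alpha-1)-(\alpha+1)e^{z_i}$, and use the constraint $z_i=\alpha\cos\phi\in[-\alpha,\alpha]$ (your Cauchy--Schwarz step) to require $\log\frac{\alpha-1}{\alpha+1}>-\alpha$, which yields the transcendental threshold $\approx 1.544$. Your auxiliary argument for the existence and uniqueness of that root is a small addition the paper leaves implicit, but otherwise the two arguments coincide.
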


\begin{remark}
    Theorem~\ref{theorem:discriminative} shows that, 
    $\mathcal{C}_{\ell}$ monotonically decreases with $z_i$ and $p_{\mathcal{M}}(y_i|x_i)$, 
    i.e., the prediction probability. 
\end{remark}

\begin{remark}
    Corollary~\ref{corollary:gradient_peak} indicates that, 
    when $\alpha>\sim 1.544$, $\mathcal C_g$ peaks at 
    $z_i = \log\left(\frac{\alpha-1}{\alpha+1}\right)$, 
    which means the corresponding data point is close to the decision boundary but misclassified.
    Meanwhile, when $\alpha$ is smaller, $\mathcal C_g$ decreases with $z$ 
    (and thus prediction probability), 
    similar to $\mathcal{C}_{\ell}$.\footnote{
        Intuitively, $\alpha$ reflects the magnitudes of 
        the model's weights and the inputs' hidden states. 
        Across all models of our experiments, we consistently find $\alpha>\sim 1.544$. 
    }
\end{remark}

We provide a visual illustration of the discriminative power 
of different representations in Figure~\ref{fig:discriminative-power}.
In particular, 
we highlight the property of the gradients with $\alpha = 5$: 
it is discriminative when predictions are wrong, 
and peaks near the decision boundary, 
and becomes very small once the prediction is confidently correct.
According to our previous analysis, 
\emph{this effectively enables the selection of diverse and non-redundant examples} 
for learning classifiers, 
as they make instances near the decision boundary more distinguishable while ignoring the redundant easy ones. 
In contrast, algorithms that use losses 
will likely over-select those with a high loss (potentially destabilizing training),
while distances based on hidden states are indifferent to inputs' distance to the decision boundary.

\subsection{Properties of selection algorithms}{\label{subsec:properties_selection}}

\begin{figure*}[t]
    \centering
    \begin{subfigure}[t]{0.54\textwidth}
        \centering
        \includegraphics[width=\linewidth]{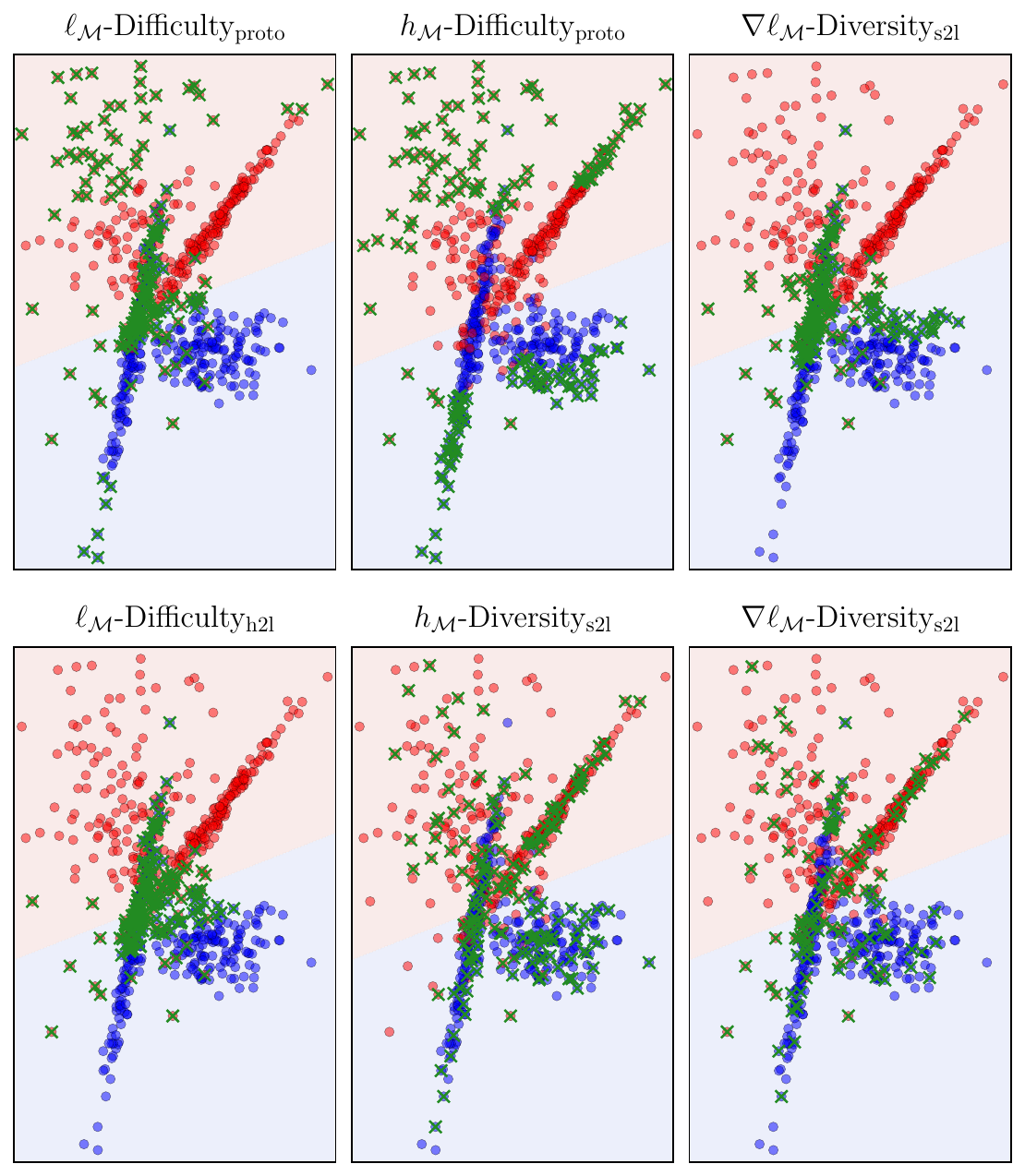}
        \caption{Selections on synthetic data.}
        \label{fig:selection_synthetic}
    \end{subfigure}%
    \begin{subfigure}[t]{0.46\textwidth}
        \centering
        \includegraphics[width=\linewidth]{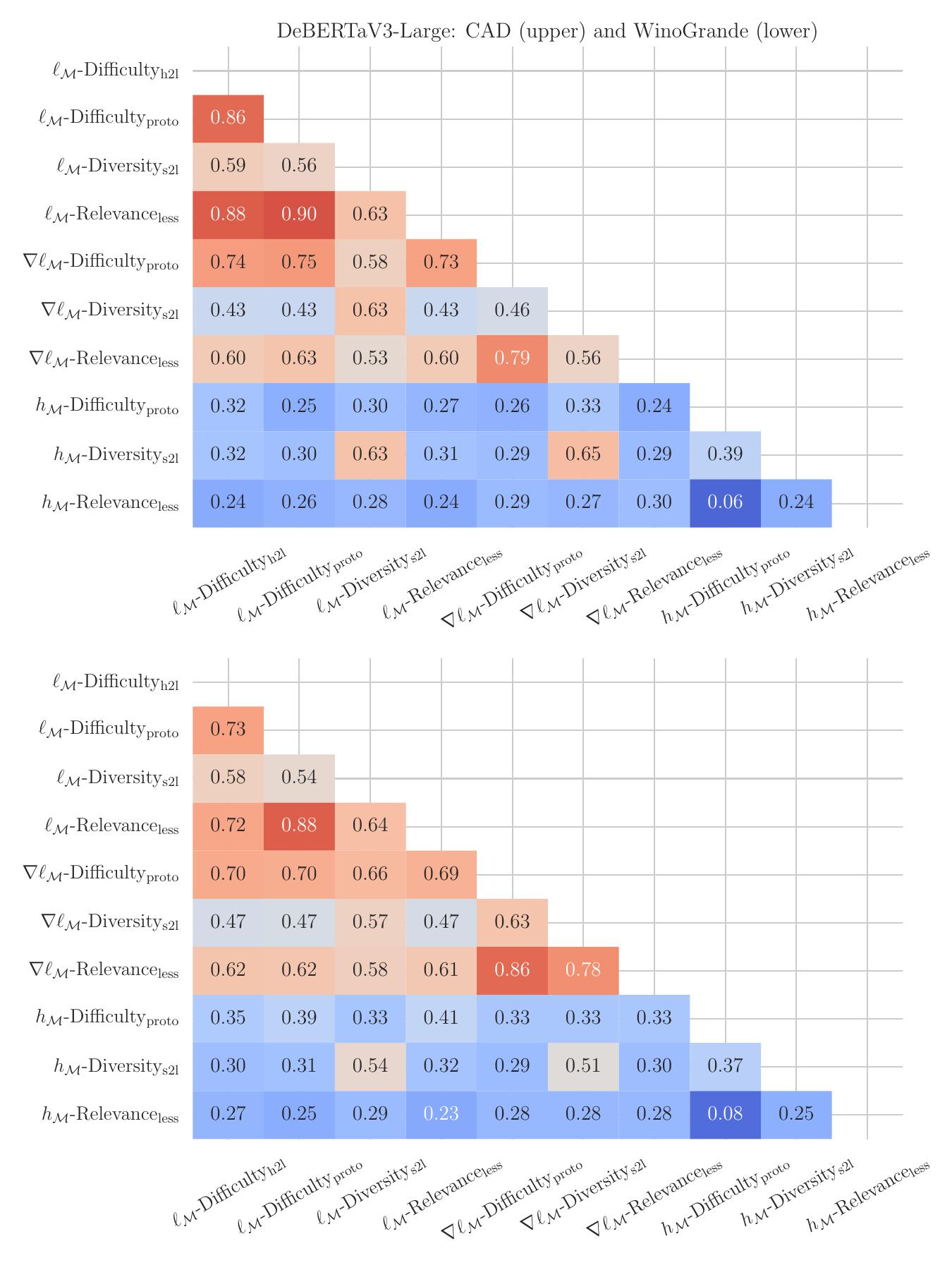}
        \caption{Selection consistency on NLP tasks.}
        \label{fig:selection_nlp}
    \end{subfigure}
    \caption{
        The consistency of selections 
        across different representations and selection algorithms.
        (a) Synthetic data: 
        we generate 600 data points from a 2D Gaussian mixture model 
        with red and blue data points representing two classes.
        The different background colors visualize the decision boundary 
        of the logistic regression reference model.
        The green Xs are the selected data points (30\% of the data).
        (b) NLP tasks:
        we use different methods to select 30\% of the data points 
        and compute their overlapping ratios 
        (i.e., $|\mathcal{S}_1 \cap \mathcal{S}_2| / |\mathcal{S}_1|$ 
        for two subsets $\mathcal{S}_1$ and $\mathcal{S}_2$). 
        Here we show the results for DeBERTaV3-Large on CAD and WinoGrande.
    }
    \label{fig:selection}
\end{figure*}

Building on the insights into representations (\S\ref{subsec:rep_reveal}), 
this section examines the properties of selection algorithms. 
We focus on two key aspects. 
First, we analyze 
\emph{how changing the data representations affect the selection of instances}, 
to understand the joint effects of both steps 
and the sensitivity of selection algorithms to different representations.
Second, we investigate 
\emph{whether selection algorithms indeed follow their objectives}, 
by visualizing the selections 
and comparing the overlap between selection algorithms with the same objective.

\paragraph{Setup} 
We focus on three selection algorithms with different objectives 
from \S\ref{subsec:representative_methods}: 
(1) prioritizing difficulty, as in prototypicality (\textbf{difficulty\(_{\text{proto}}\)}); 
(2) prioritizing diversity, as in S2L (\textbf{diversity\(_{\text{s2l}}\)}); and 
(3) prioritizing relevance to validation data, as in LESS (\textbf{relevance\(_{\text{less}}\)}).
We combine each selection algorithm with all three representations.
We also compare Hard-to-Learn (\textbf{difficulty\(_{\text{htl}}\)}) with prototypicality, 
because both methods aim to select \emph{difficult} instances. 
We use 30\% of the data as our budget. 

First, we conduct a synthetic experiment to provide an interpretable analysis. 
We use a 2D Gaussian mixture model to generate 600 data points, 
which we treat as the hidden states. 
We then train a logistic regression classifier as the reference model 
to collect training dynamics and gradients.
We visualize the selected instances in Figure~\ref{fig:selection_synthetic}.
 
Second, we conduct task-specific fine-tuning experiments 
on three different types of tasks: 
CAD (binary hate speech classification, \citealp{vidgen-etal-2021-introducing}), 
for which we also include DynaHate as an OOD test set~\citep{kiela-etal-2021-dynabench},
WinoGrande (multiple choice commonsense reasoning, ~\citealp{sakaguchi2019winogrande}), 
and DialogSum (abstractive summarization, ~\citealp{chen-etal-2021-dialogsum}). 
We use DeBERTaV3 base and large~\citep{he2023debertav} for CAD and WinoGrande,
and OPT 125M and 350M~\citep{zhang2022opt} for DialogSum.\footnote{
    We use relatively small models to avoid huge computation 
    during both training (we trained 1200+ models for controlled comparisons) 
    and gradient projection 
    (which can take $>10$ times longer than training due to high dimensionality). 
} 
We show the ratios of mutually selected instances between 
different representation-selection combinations in Figure~\ref{fig:selection_nlp}. 
Note that because we select 30\% of the data points,
random selection would result in an overlap ratio of 0.3.\footnote{
    Let $N$ be the total size of the dataset. 
    The expected number of overlap items is 
    $|\mathcal{S}_1 \cap \mathcal{S}_2|=0.3N \times 0.3N = 0.09N$. 
    Since $|\mathcal{S}_1|=0.3N$, the overlap ratio is $ 0.09N/0.3N = 0.3$.
}
We also include a discussion of the out-of-distribution settings 
in Appendix~\ref{app_sec:ood_settings}.

\paragraph{Varying representations drastically changes selection}

For example, $h_{\mathcal{M}}$-{difficulty\(_{\text{proto}}\)} and 
$\ell_{\mathcal{M}}$-{difficulty\(_{\text{proto}}\)} 
on synthetic data respectively select instances far from and near the decision boundary
(Figure~\ref{fig:selection_synthetic}), 
and $h_{\mathcal{M}}$-{relevance\(_{\text{less}}\)} and $\nabla_w \ell_{\mathcal{M}}$-{relevance\(_{\text{less}}\)} 
have lower-than-random overlap on both NLP tasks (Figure~\ref{fig:selection_nlp}).
Nevertheless, we find 
the the sensitivity of selection algorithms towards representations varies. 
Particularly, \emph{
    diversity-preserving algorithms are less affected by the representation choice}.
For instance, compared to prototypicality using different representations, 
diversity shows smaller variations in Figure~\ref{fig:selection_synthetic}, 
and similar results are observed in the NLP tasks in Figure~\ref{fig:selection_nlp}.
This is consistent with that 
diversity-preserving algorithms sample evenly from different regions. 

\paragraph{Representations have a larger influence on instance selections 
than the selection algorithms themselves} 
For example, 
$\nabla_w \ell_{\mathcal{M}}$-{difficulty\(_{\text{proto}}\)} 
overlaps more on CAD 
with $\nabla_w \ell_{\mathcal{M}}$-{diversity\(_{\text{s2l}}\)} (0.46) and $\nabla_w \ell_{\mathcal{M}}$-{relevance\(_{\text{less}}\)} (0.79), 
than with $h_{\mathcal{M}}$-{difficulty\(_{\text{proto}}\)} (0.26), see Figure~\ref{fig:selection_nlp}. 
Additionally, 
the selections based on gradients and losses have larger overlap with each other 
than with those based on hidden states.
For example, the overlap when using gradients and losses with the same selection algorithm 
($\nabla_w \ell_{\mathcal{M}}$-{difficulty\(_{\text{proto}}\)} and $\ell_{\mathcal{M}}$-{difficulty\(_{\text{proto}}\)}) 
is as large as 0.75 on CAD. 
The observations here are consistent with our theoretical analysis in \S\ref{subsec:rep_reveal}: 
losses and gradients are more informative.

\paragraph{Selections do not always following their objectives}

Because selection algorithms are typically heuristic-driven to achieve specific objectives, 
it is crucial to assess whether they indeed follow these objectives. 
Surprisingly, our results suggest otherwise: 
(1) On synthetic data, most selections from $h_{\mathcal{M}}$-{difficulty\(_{\text{proto}}\)}---which 
aims to select difficult instances---are 
those that are correctly predicted and far from the decision boundary, which can be considered to be the easier ones. 
(2) Comparing the selections under the same objective, 
we observe that \emph{these selections can be vastly different}. 
For example, even though they both aim to select difficult instances, 
$h_{\mathcal{M}}$-{difficulty\(_{\text{proto}}\)} and $\ell_{\mathcal M}$-{difficulty\(_{\text{htl}}\)} show 
very low consistency in instance selections: 
this divergence is evident in synthetic data, 
where only a few data points are mutually selected (Figure~\ref{fig:selection_synthetic}); 
and in NLP tasks, their overlap ratios are close to the random-guess baseline (Figure~\ref{fig:selection_nlp}). 
This result highlights the need for carefully assessing the consistency 
between selection algorithms and their intended objectives, in future studies.

\begin{figure*}[t]
    \centering
    \begin{subfigure}[t]{0.30\textwidth}
      \centering
      \includegraphics[width=\linewidth]{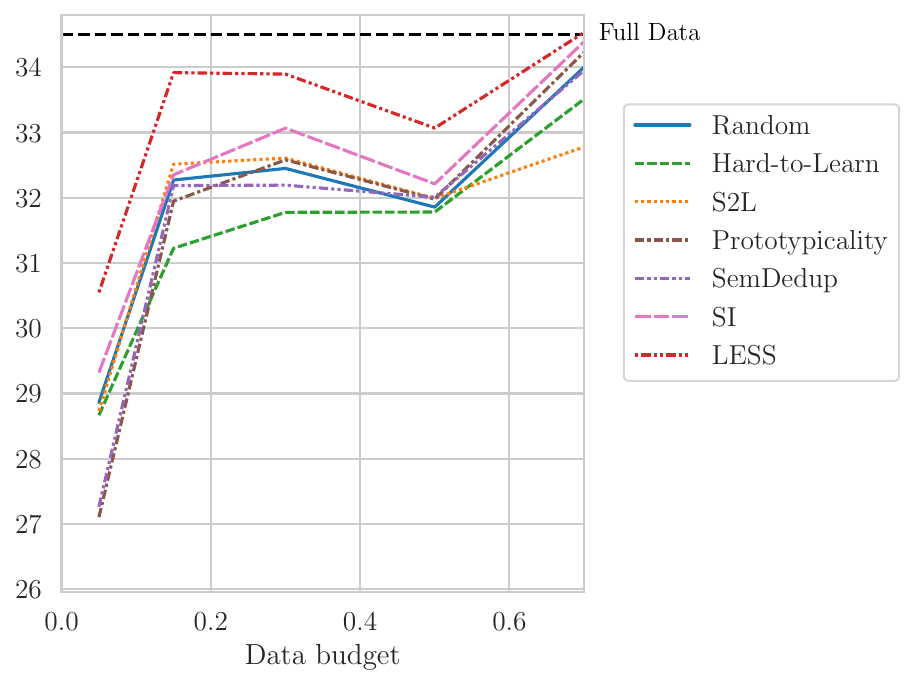}
      \caption{Rouge-L on DialogSum}
      \label{fig:dialogsum-opt-350m}
    \end{subfigure}\hfill
    \begin{subfigure}[t]{0.30\textwidth}
      \centering
      \includegraphics[width=\linewidth]{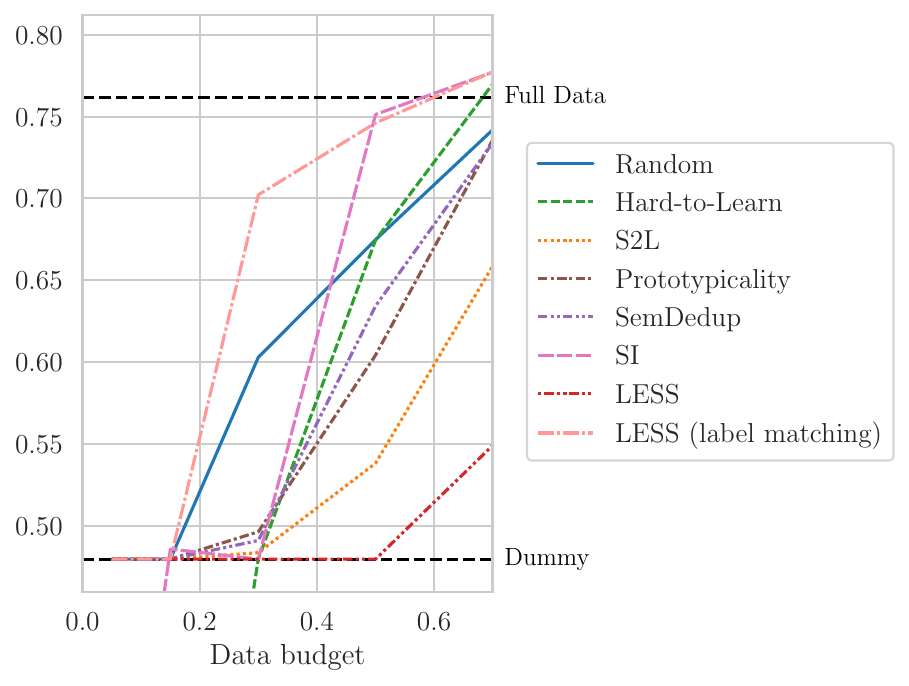}
      \caption{F1 on CAD}
      \label{fig:cad-debertav3-large}
    \end{subfigure}\hfill
    \begin{subfigure}[t]{0.30\textwidth}
      \centering
      \includegraphics[width=\linewidth]{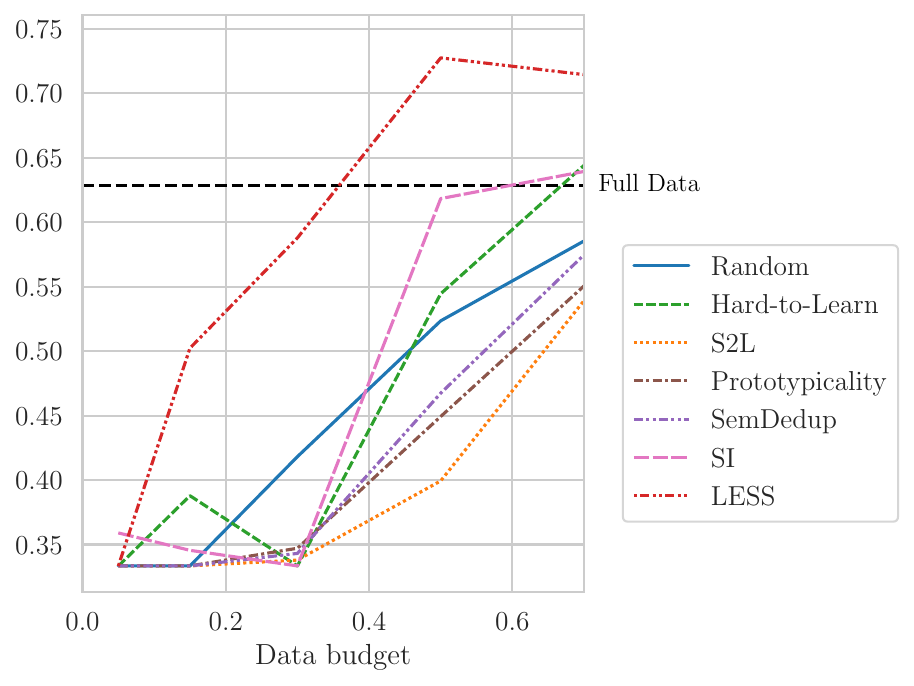}
      \caption{F1 on DynaHate}
      \label{fig:dynahate-debertav3-large}
    \end{subfigure}
  
    \vspace{1em} %
  
    \begin{subfigure}[t]{0.30\textwidth}
      \centering
      \includegraphics[width=\linewidth]{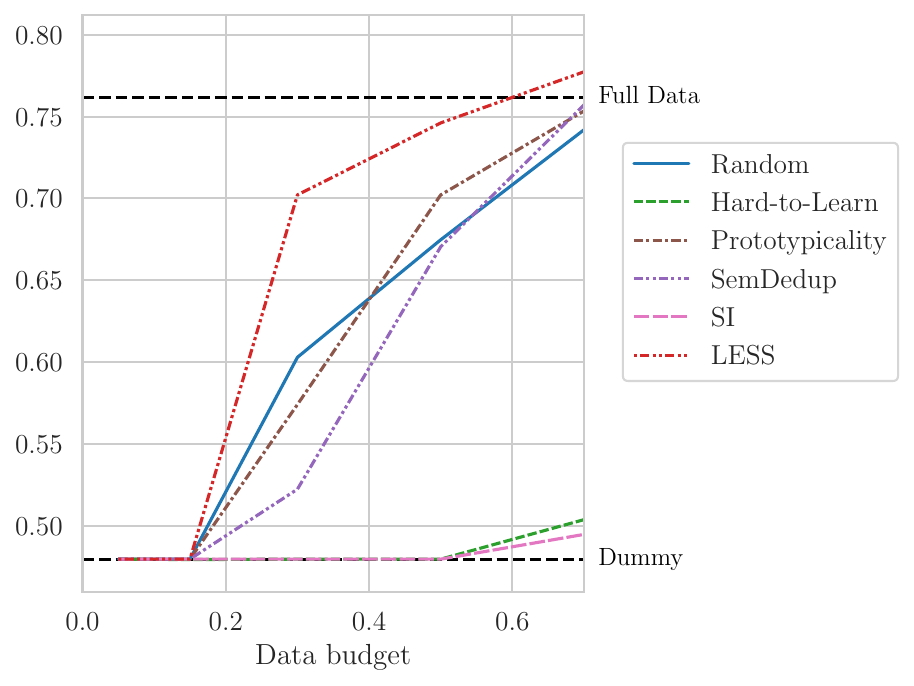}
      \caption{F1 on CAD (label match)}
      \label{fig:cad-debertav3-large-balanced}
    \end{subfigure}\hfill
    \begin{subfigure}[t]{0.30\textwidth}
      \centering
      \includegraphics[width=\linewidth]{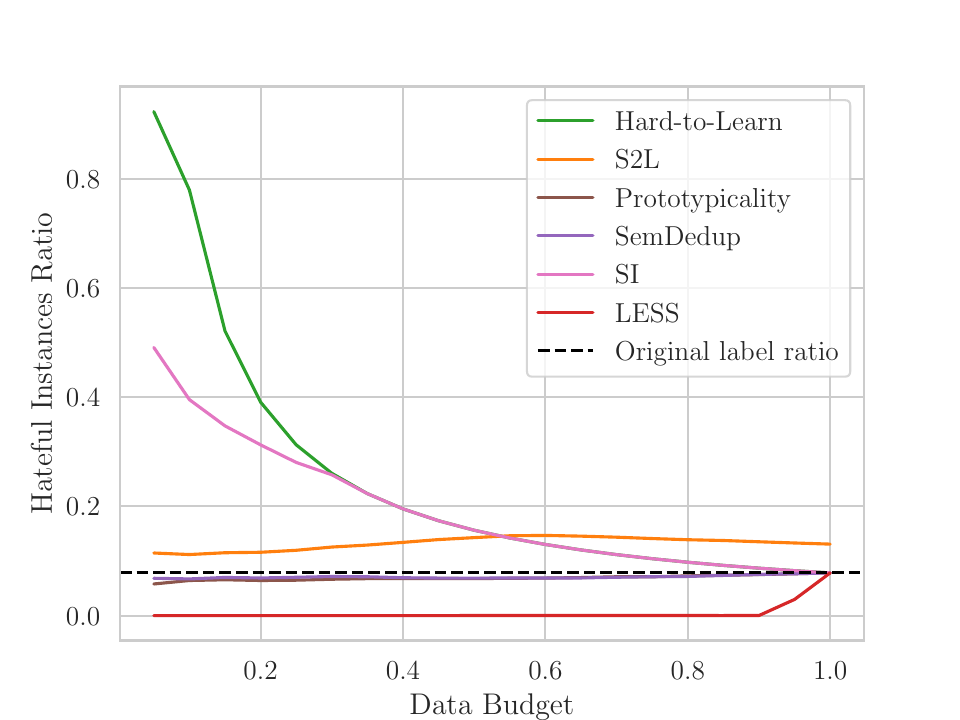}
      \caption{Label distribution on CAD}
      \label{fig:less-cad-balancing}
    \end{subfigure}\hfill
    \begin{subfigure}[t]{0.30\textwidth}
      \centering
      \includegraphics[width=\linewidth]{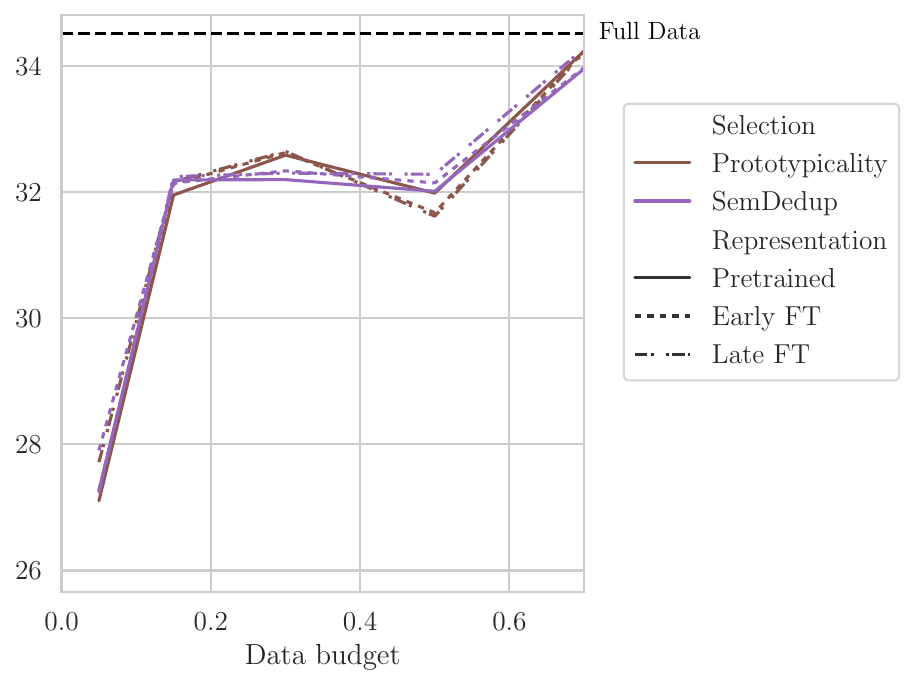}
      \caption{Fine-tuning: DialogSum}
      \label{fig:merged-ft-hiddenstate-dialogsum-opt-350m}
    \end{subfigure}

    \vspace{1em} %
  
    \begin{subfigure}[t]{0.30\textwidth}
      \centering
      \includegraphics[width=\linewidth]{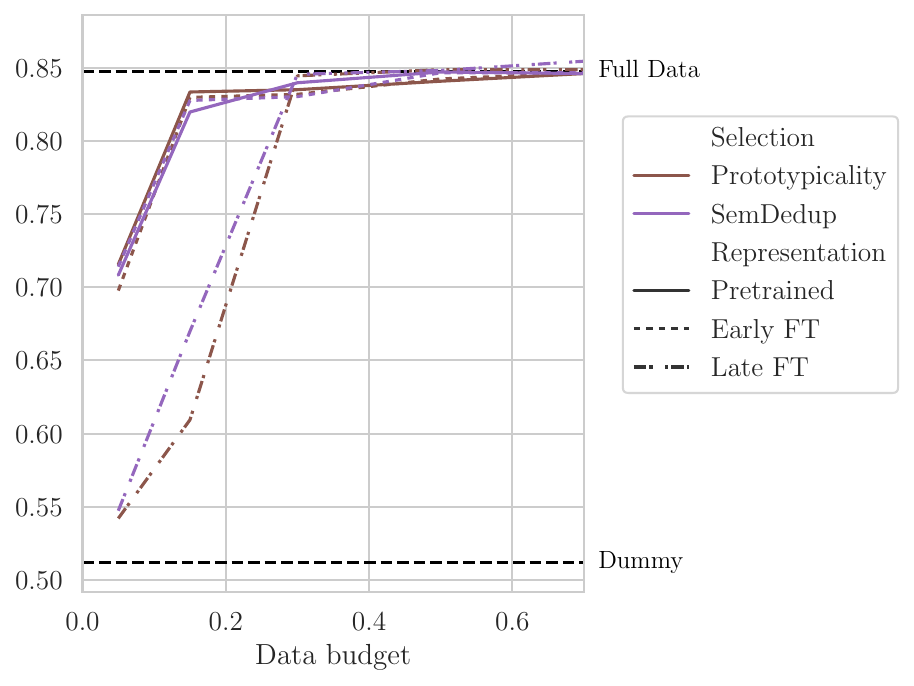}
      \caption{Fine-tuning: WinoGrande}
      \label{fig:merged-ft-hiddenstate-winogrande-debertav3-large}
    \end{subfigure}\hfill
    \begin{subfigure}[t]{0.30\textwidth}
      \centering
      \includegraphics[width=\linewidth]{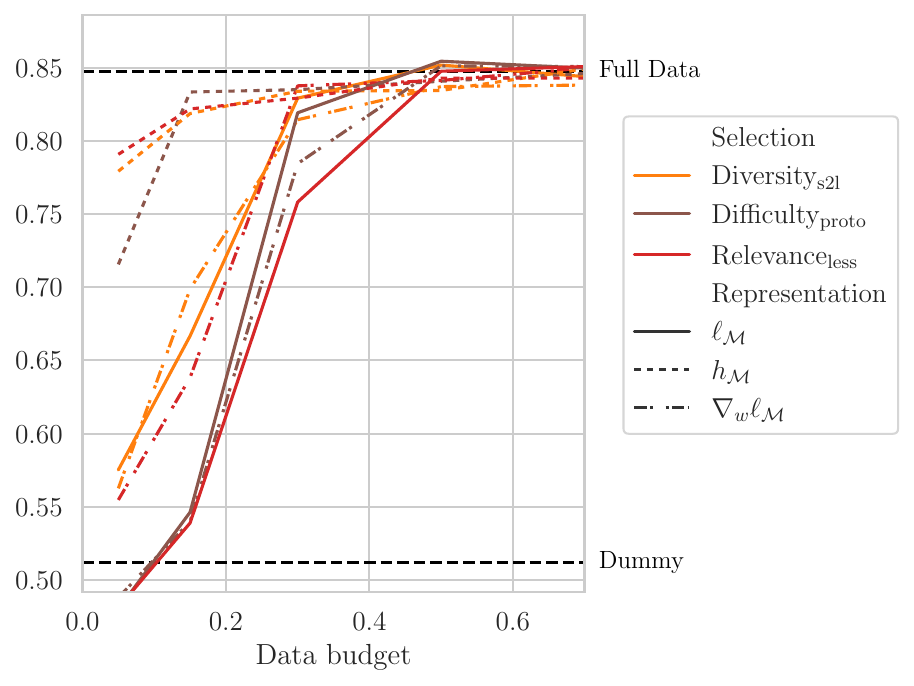}
      \caption{Varying rep.: WinoGrande}
      \label{fig:merged-sampling-feature-winogrande}
    \end{subfigure}\hfill
    \begin{subfigure}[t]{0.30\textwidth}
      \centering
      \includegraphics[width=\linewidth]{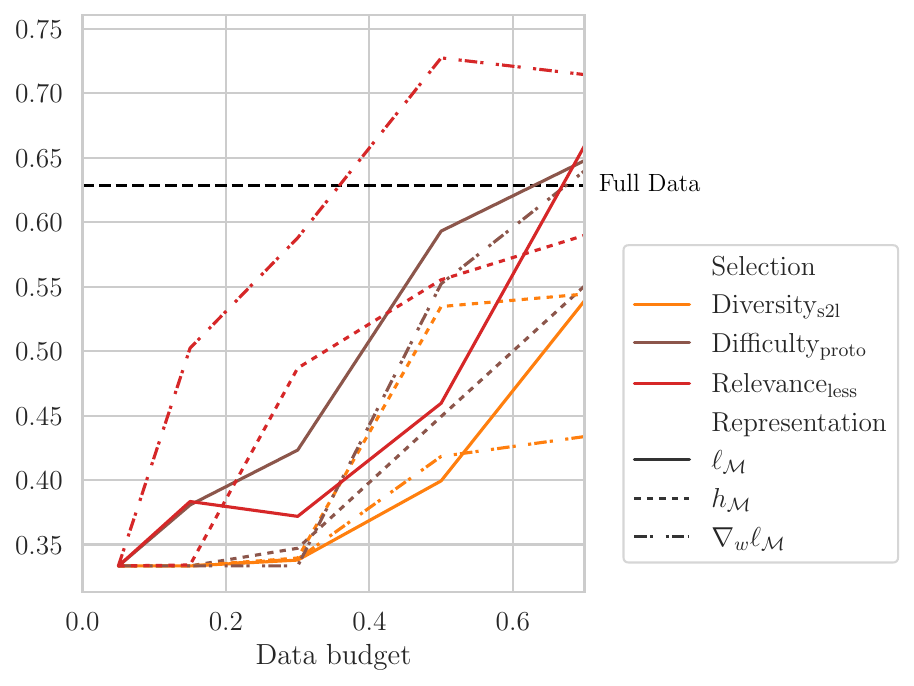}
      \caption{Varying rep.: DynaHate}
      \label{fig:merged-sampling-feature-dynahate}
    \end{subfigure}
  
    \caption{%
      Results with $\text{DeBERTaV3}_{\text{Large}}$ and OPT-350M models.
      (a)–(d) show performance across data budgets,
      (e) presents label distributions,
      while (f) and (g) compare pretrained vs.\ fine-tuned hidden states,
      and (h) and (i) examine representation variation for WinoGrande and DynaHate.
    }
    \label{fig:different_data_budgets}
  \end{figure*}

\section{Performance Across Data Budgets}\label{sec:empirical_study}

Building on our previous analyses on how different components affect data selection 
(\S\ref{subsec:properties_selection}),
this section examines their impact on model performance under different data budgets. 
These experiments help validate our previous findings 
on the effectiveness of different representations and selection algorithms, 
while addressing practical questions about   
\emph{which data pruning methods are best suited for specific tasks and data budgets}, 
such as when handling substantial distribution shift between training and testing data. 
Moreover, we perform two sets of ablation experiments: 
using fine-tuned instead of pretrained hidden states, 
as they may encode task-specific information; 
and experimenting with different representation-selection pairs, 
to better understand the contribution of each component. 

\paragraph{Setup}

Our experiments on NLP tasks follow the same setup as in \S\ref{subsec:properties_selection}.
Moreover, we use six different data budgets
5\%, 15\%, 30\%, 50\%, and 70\% of the original dataset, 
and train all models for 15 epochs. 
Additionally, we consider three baselines: 
random selection (Rand),  the full original dataset (Full Data), and 
a dummy predictor (Dummy), 
which represents the better performance between a randomized predictor and a majority class predictor. 
See Appendix~\ref{app_sec:exp_details} for more details. 

\subsection{Main observations}
We make two main observations from our results 
(see Figure~\ref{fig:different_data_budgets} for representative examples, 
with additional results in Appendix~\ref{app_sec:additional_results}). 
First, selecting the appropriate data pruning method for each specific setting is crucial: 
when they are applied outside their original context, 
\emph{they are often outperformed by random selection}, which is consistent with \citet{okanovic2024repeated}. 
Notably, 
hidden-state-based methods perform worse than or similarly to random selection on all tasks, 
especially with lower data budgets. 
This aligns with our previous results (\S\ref{sec:deciphering}),
that pretrained hidden states may not have sufficient discriminative power 
to select the most important instances for the model parameters.\footnote{
    Note that our experiments differ from previous studies that use hidden states,  
    as they only experimented under high data budget settings~\citep{sorscher2022beyond} 
    and noisy pretraining datasets~\citep{abbas2023semdedup}. 
}
Similarly, higher data budgets are needed for other methods 
that aim to select difficult instances, i.e., hard-to-learn and self-influence: 
they achieve competitive performance with $>30\%$ data budgets, 
but are only comparable to the dummy baseline with lower data budgets.
This is consistent with the observations in \citet{swayamdipta-etal-2020-dataset}, 
that using only the hardest instances will make models fail to converge. 

Second, gradient-based methods like LESS and self-influence perform competitively 
across most tasks (Figure~\ref{fig:dialogsum-opt-350m} \& \ref{fig:dynahate-debertav3-large}), 
\emph{reaffirming the effectiveness of gradients as data representations}. 
Interestingly, LESS performs well on highly imbalanced datasets 
only when applying \textbf{label matching}, 
i.e., enforcing selected instances to maintain the original label ratio. 
Without this constraint, 
LESS tends to over-select majority-label training instances, 
because it selects training instances based on their distance to validation data. 
Since instances with the same label tend to have shorter distances 
between their gradient representations (\S\ref{subsec:rep_reveal}), 
LESS would further amplify this bias. 
We validate this on CAD, 
where only 10\% of the instances are hateful (Figure~\ref{fig:less-cad-balancing}), 
by plotting the ratio of hateful labels in selected instances. 
Figure~\ref{fig:less-cad-balancing} shows that 
LESS primarily selects non-hateful instances, with very few hateful ones included 
until the data budget reaches 90\%.
However, this constraint can be detrimental to methods like hard-to-learn, 
which prioritizes rare instances such as hateful ones.

\subsection{Ablation analysis}

\paragraph{Fine-tuned hidden states}
We previously observed that 
hidden-state-based methods perform similarly to random selection.
Since effective representations should encode 
label information and the distance to the decision boundary, 
we examine whether fine-tuning can help improve their performance. 
Specifically, we compare two types of hidden states when models stop training at 
early (one epoch, retaining more pretrained knowledge) 
and late (15 epochs, encoding more task-specific information) stages, 
as shown in Figure~\ref{fig:merged-ft-hiddenstate-dialogsum-opt-350m} and \ref{fig:merged-ft-hiddenstate-winogrande-debertav3-large}. 
However, there is little difference between using different hidden states, 
with none of them clearly outperforming random selection, 
suggesting the insufficiency of fine-tuning. 

\paragraph{Varying representations}

We have shown that data representations have a greater influence 
on selections than the selection algorithms themselves (\S\ref{sec:deciphering}).
To study whether this holds for model performance,
we similarly use different representations with 
diversity\(_{\text{s2l}}\), difficulty\(_{\text{proto}}\), and relevance\(_{\text{less}}\),
and show the results for $\text{DeBERTaV3}_{\text{Large}}$ on WinoGrande and DynaHate 
in Figures~\ref{fig:merged-sampling-feature-winogrande} \& \ref{fig:merged-sampling-feature-dynahate}.
In line with \S\ref{subsec:properties_selection},
we find that \emph{similar representations often lead to similar performance} 
(Figure~\ref{fig:merged-sampling-feature-winogrande}).
Nevertheless, the best selection algorithm is still task-dependent.
For example, 
when there is a train-test distribution mismatch
(e.g., DynaHate in Figure~\ref{fig:merged-sampling-feature-dynahate}),
LESS performs better than other methods by considering validation performance.

\section{Conclusion}\label{sec:conclusion}
Despite the success of data pruning, 
the contributions of its design choices have remained unclear. 
This paper has identified two key components: data representations and selection algorithms, 
and provided a comprehensive overview of common choices (\S\ref{sec:decoupling}).
Moreover, we have provided both 
theoretical and controlled empirical analyses on their effectiveness (\S\ref{sec:deciphering}), 
and their implications across different data budgets (\S\ref{sec:empirical_study}).
Our results highlight the critical role of data representations 
due to their impact on selected instances, 
and the importance of evaluating selection algorithms carefully,
as they are not guaranteed to meet their objectives. 
Our findings stress the need for 
the development of efficient and informative data representations. 

\section*{Limitations}\label{sec:limitations}

One limitation of our work is its focus on task-specific fine-tuning, 
leaving other settings, 
like pretraining, supervised fine-tuning, and reinforcement learning from verifiable rewards, unexplored. 
This is largely due to 
(1) the large amount of computation required to conduct rigorous controlled studies such as ours, 
and (2) the challenges in scalable and low-cost evaluation~\citep{zheng2023judging}. 
Future studies could explore data pruning approaches in these settings 
by generating synthetic training and validation tasks, 
which allows for low-cost and controlled studies. 
This has been recently shown to be useful 
for proof-of-concept studies~\citep{allenzhu2024physicslanguagemodels31}, 
whose conclusions generalize to larger scale settings well. 
Moreover, we focus on methods that do not require external models 
(e.g., prompting language models to evaluate example quality).  
Future work could expand our analyses to include such approaches.

\section*{Acknowledgments}
We thank the anonymous reviewers for their helpful comments. 
We thank members of the NLP group at Utrecht University for their feedback, 
especially to Anna Wegmann. 
We thank the members of the MaiNLP group at LMU Munich for their feedback, 
especially to Barbara Plank and Philipp Mondorf.  
This work is supported by the ERC Starting Grant DataDivers 101162980.

\bibliography{references}

\clearpage
\appendix

\section{Experimental Details}\label{app_sec:exp_details}

\paragraph{Implementation Details}

All experiments were conducted using the AdamW optimizer. 
For most models, we set the learning rate to 2e-5, 
except for $\text{DeBERTaV3}_{\text{Large}}$, 
where we followed \citet{he2023debertav} and used 1e-5. 
Additionally, we do a learning rate warmup for the first 10\% of training steps. 
For gradient-based pruning, 
the reference models were trained with LoRA, 
using a higher learning rate of 1e-4, $r=64$, and $\alpha=16$
following~\citet{ivison2023camels}, 
and apply LoRA on all linear layers.
We train all models for 15 epochs,
For batch size, 
we used 16 for both WinoGrande and DialogSum, and 32 for CAD, 
to fit all experiments on a single NVIDIA A100-40GB GPU.
We use maximum sequence lengths of 300, 128, and 512 tokens. 
For all experiments we use the same reference models as the main models for fair comparison.

For $k$-Means clustering in S2L, Prototypicality, and SemDedup,
we use 100 clusters on CAD and DialogSum, and 200 clusters on WinoGrande, 
following the suggestions from~\citet{NEURIPS2023_a8f8cbd7} 
to set the number of clusters to around the square root of the number of instances.
Moreover, 
we compute gradients using the first five checkpoints for all experiments,
and project them into a 1024-dimensional space using \citet{park2023trak} 
(details see hyperparameter search).

\paragraph{Evaluation Metrics}

We evaluated CAD and DynaHate using the macro F1 score, 
WinoGrande by accuracy, 
and DialogSum by ROUGE-1, ROUGE-2, and ROUGE-L 
(from HuggingFace Evaluate), 
following the original studies~\citep{
    vidgen-etal-2021-introducing,sakaguchi2019winogrande,chen-etal-2021-dialogsum}.

\paragraph{Infrastructure}

All experiments were run on a single NVIDIA A100-40GB GPU using three random seeds. 
We used PyTorch 2.3, Transformers 4.42, and vLLM 0.5 for training and inference. 
Moreover, we use {\tt bfloat16} on all experiments to improve efficiency. 

\paragraph{Hyperparameter Search}

We searched for four hyperparameters:
the number of training epochs, 
the number of clusters for $k$-Means clustering,
the dimensionality of the projected gradients,
and the checkpoints to use for gradient computation.

For the number of training epochs, 
we first perform a search over 3, 5, 7, and 10 epochs on all datasets and models,
using three random seeds. 
We observe that models of different sizes share similar performance trends over epochs, 
with improvements continuing as the number of epochs increased. 
We therefore use the smaller models, 
i.e., $\text{DeBERTaV3}_{\text{Base}}$ and OPT-125M, 
and extend this search over 15, 20, and 25 epochs. 
Across all datasets, the best performance is achieved with 15 epochs.

For the number of clusters,
we search over 2, 5, 10, 20, 50, 100, and 200 clusters 
for each dataset and model, using three random seeds.
The results are highly consistent across cluster numbers.
Following \citet{NEURIPS2023_a8f8cbd7}, 
we use the square root of the dataset size as a guideline, 
settling on 100 clusters for CAD and DialogSum, and 200 for WinoGrande.

For gradients, 
we use smaller models ($\text{DeBERTaV3}_{\text{Base}}$ and OPT-125M) for hyper-parameter search, 
and only one random seed (0) to avoid the high costs of computing and projecting gradients. 
We compute the gradients for all 15 checkpoints,
and project them into 1024, 2048, and 4096 dimensions.
First, we observe that different projections yield similar results,
and thus choose 1024 for further experiments for efficiency.
Second, we experimented with different strategies for selecting checkpoints,
including 
the first three, the last three, 
the first five, the last five, 
and evenly spaced three and five checkpoints. 
Using the first checkpoints is the most consistent with using all checkpoints, 
with the first five yielding a minimum Spearman's rank correlation of 0.96. 
We therefore use the first five checkpoints for all experiments.

\section{Overview of Data Pruning Methods}\label{app_sec:overview_dpm}

\paragraph{Hard-to-Learn (training dynamics)}

The Hard-to-Learn method is based on a simple intuition: 
training instances that are \textbf{difficult} for models to fit often contain fewer regular patterns 
and can thus improve model generalization~\citep{
    swayamdipta-etal-2020-dataset,pmlr-v139-jiang21k}.
In classification tasks, the score of an instance $(x_i,y_i)$ is defined as 
\emph{the average prediction probability of the correct label across different epochs}, 
i.e., $\frac{1}{T} \sum_{t=1}^{T} p_{\mathcal{M}}^{(t)}(y_i|x_i)$. 
The main model is then trained on instances with the lowest scores. 
Originally proposed for classification tasks, 
\citet{bhatnagar-etal-2022-chia} and \citet{ince2023harnessing} 
extend this concept to generation tasks, 
by replacing the minus average prediction probability with the inverse perplexity. 

\paragraph{SmallToLarge (training dynamics)}

SmallToLarge (S2L; \citealp{Yang2024SmallToLargeS}) is proposed 
to select \textbf{diverse} instances, 
to preserve full-dataset knowledge during supervised fine-tuning. 
Noting that similar loss trajectories indicate similar knowledge, 
S2L performs three steps to ensure the diversity of the selected data. 
First, each training instance is represented by its \emph{cross entropy loss trajectory} 
observed during reference model training. 
S2L then performs $k$-means clustering on these trajectories. 
Finally, S2L iteratively samples from each cluster, 
while balancing the number of instances across clusters.

\paragraph{Prototypicality (hidden states)}

The Prototypicality method \citep{sorscher2022beyond} selects \textbf{difficult} instances in pretraining, 
by exploiting their \emph{similarities}: 
it measures difficulty based on how \emph{prototypical} an instance is.
Specifically, after representing instances by their \textbf{hidden states}, 
prototypicality applies $k$-means clustering and 
ranks instances based on their distances to their cluster centroids. 
Instances with larger distances 
are considered less prototypical and therefore more difficult, 
and are thus selected to train the main model.

\paragraph{SemDeDup (hidden states)}
Building on Prototypicality, targeting large-scale pretraining, 
SemDeDup includes an additional step to also account for data diversity~\citep{abbas2023semdedup}:
after clustering, 
it identifies semantically duplicate pairs of instances within each cluster 
using cosine similarities of their \textbf{hidden states}. 
For each identified duplicate pair, 
it retains the instance that lies farther from the cluster centroid, 
thereby prioritizing \textbf{diversity} while maintaining \textbf{difficulty}.

\paragraph{LESS (gradients)}

Proposed for supervised fine-tuning, 
LESS additionally requires a validation set to select more \textbf{relevant} instances, 
using cosine similarities between \textbf{gradients}~\citep{xia2024less}. 
Formally, the relevance of $(x_i, y_i)$ 
w.r.t. a validation instance $(x_\mathrm{val}, y_\mathrm{val})$ is defined as 
$\sum_{t=1}^{T} \eta_t \cdot 
\nabla_\theta \ell_{\mathcal{M}}^{(t)}(x_i, y_i)^{\top}
\nabla_\theta \ell_{\mathcal{M}}^{(t)}(x_\mathrm{val}, y_\mathrm{val}))$, 
where $\eta_t$ is the average learning rate between the $t$-th and the $t+1$-th checkpoint. 
These gradients are normalized in generation tasks 
because their norms negatively correlate with sequence lengths. 

\paragraph{Self-Influence (gradients)}

\citet{NEURIPS2020_1e14bfe2} define memorization during training  
as the prediction probability decrease of an instance 
before and after removing it from the training set, i.e., self-influence. 
They argue that memorized instances are usually \textbf{difficult}-to-predict, 
and thus contribute more to generalization 
under the long-tail assumption of testing cases~\citep{10.1145/3357713.3384290}. 
In this work, following \citet{bejan-etal-2023-make}, 
we use TracIn~\citep{NEURIPS2020_e6385d39} for approximation. 
Formally, the self-influence score of $(x_i, y_i)$ is estimated as 
$\sum_{t=1}^{T} \eta_t \nabla_{\theta} \ell_{\mathcal{M}}^{(t)}(x_i, y_i)^{\top}
\nabla_{\theta} \ell_{\mathcal{M}}^{(t)}(x_i, y_i)$.

\section{Discussion of out-of-distribution settings}\label{app_sec:ood_settings}
We focus on in-distribution settings in this paper, 
where the validation and test data come from the same distribution, 
although the train data may come from a different distribution, 
e.g., CAD(train)-DynaHate(validation/test) in HSD. 
Here we discuss the potential impact of the out-of-distribution (OOD) settings, 
where the training and validation data come from different distributions.

Regarding representations, we expect they perform similarly as in in-distribution settings. 
For example, gradients should be better: 
their discriminative power can help model build a more robust decision boundary; 
meanwhile, we still expect hidden states to be less effective,
because they lack the discriminative power to distinguish instances 
that are different for model training. 

Regarding selection objectives, 
we expect methods that prioritize difficult instances to perform better, 
because easy training instances usually contain more regularities and shortcuts. 
For example, in NLI, contradiction with negation words 
usually are considered ``easier'' than the ones without negation words, 
because they are seen more frequently in the training data; 
meanwhile, we expect methods that prioritize relevance to perform worse, 
because this might drive the training data distribution 
further away from the test data distribution.

\clearpage

\section{Proofs}\label{app_sec:proofs}
\subsection{Derivation of Remark~\ref{remark:explicit}}\label{app_sec:proof_explicit}
We first restate the remark for reference.

\begin{remark}[Explicit expressions]
    Let $z_{*} = y_{*} w^T h_{\mathcal{M}}(x_{*})$ be the (signed and scaled) distance 
    from $h_{\mathcal{M}}(x_{*})$ to the decision boundary.
    We have
    $D_{\ell} = (\log{((1 + e^{-z_i}) / (1 + e^{-z_j})}))^2$, 
    $
    D_g = \|\frac{y_i h_{\mathcal{M}}(x_i)}{1+e^{z_i}} - \frac{y_j h_{\mathcal{M}}(x_j)}{1+e^{z_j}}\|_2^2
    = \frac{\|h_{\mathcal{M}}(x_i)\|_2^2}{(1+e^{z_i})^2} + \frac{\|h_{\mathcal{M}}(x_j)\|_2^2}{(1+e^{z_j})^2}
    - 2 \frac{y_i y_j h_{\mathcal{M}}(x_i)^T h_{\mathcal{M}}(x_j)}{(1+e^{z_i})(1+e^{z_j})}$.
\end{remark}

\begin{proof}[Derivation]
    We first derive the expression for $D_{\ell}$. 
    \begin{IEEEeqnarray}{rll}
        D_{\ell} 
        &:= \|\ell_{\mathcal{M}}(x_i, y_i) - \ell_{\mathcal{M}}(x_j, y_j)\|_2^2 \\
        &= \|\log(1 + e^{-y_i w^T h_{\mathcal{M}}(x_i)}) \nonumber \\
        &\hspace{1.5cm}- \log(1 + e^{-y_j w^T h_{\mathcal{M}}(x_j)})\|_2^2 \\
        &= (\log(1 + e^{-z_i}) - \log(1 + e^{-z_j}))^2 \\
        &= (\log(\frac{1 + e^{-z_i}}{1 + e^{-z_j}}))^2
    \end{IEEEeqnarray}

    Next, we derive the expression for $D_g$. Recall that in \S\ref{sec:deciphering}, we defined $D_g := \|\nabla_w \ell_{\mathcal{M}}(x_i, y_i) - \nabla_w \ell_{\mathcal{M}}(x_j, y_j)\|_2^2$. We first derive $\nabla_w \ell_{\mathcal{M}}(x_i, y_i) = \frac{y_i h_{\mathcal{M}}(x_i)}{1+e^{z_i}}$.

    \begin{IEEEeqnarray}{rll}
        \nabla_w \ell_{\mathcal{M}}(x_i, y_i)
        &= \nabla_w \log(1 + e^{-y_i w^T h_{\mathcal{M}}(x_i)}) \\
        &= \nabla_w \log(1 + e^{-z_i}) \\
        &= -(1 - \sigma(z_i))y_i h_{\mathcal{M}}(x_i) \\
        &= -\frac{y_i h_{\mathcal{M}}(x_i)}{1 + e^{z_i}}
    \end{IEEEeqnarray}

    The derivation of $\nabla_w \ell_{\mathcal{M}}(x_j, y_j) = \frac{y_j h_{\mathcal{M}}(x_j)}{1+e^{z_j}}$ is analogous to the one above, and thus we get $D_g = \|\frac{y_i h_{\mathcal{M}}(x_i)}{1+e^{z_i}} - \frac{y_j h_{\mathcal{M}}(x_j)}{1+e^{z_j}}\|_2^2$.
\end{proof}

\subsection{Proof of Theorem~\ref{theorem:discriminative}}\label{app_sec:proof_discriminative}

We first restate the theorem for reference.

\begin{theorem}
    The discriminative power of losses and gradients (relative to that of hidden states) 
    are dependent of 
    the region the hidden states lie in, satisfying
    $$
    \begin{aligned}
    \mathcal{C}_{\ell} &= \frac{e^{z_i}}{1+e^{z_i}} \|w\|, \text{ and } \\
    \mathcal{C}_g &\leq 
    \frac{1}{1+e^{z_i}}+\frac{e^{z_i}}{\left(1+e^{z_i}\right)^2}\|h_{\mathcal{M}}(x_i)\| \|\|w\|.
    \end{aligned}
    $$
\end{theorem}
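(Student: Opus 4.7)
My plan is to derive both quantities by directly differentiating the closed-form expressions available from Remark~\ref{remark:explicit} and then controlling the resulting Jacobians with elementary matrix-norm inequalities. Since everything is already expressed in terms of $z_i$, $y_i$, $w$, and $h_{\mathcal{M}}(x_i)$, the argument reduces to chain-rule bookkeeping followed by recognizing the structural form of each Jacobian.

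For $\mathcal{C}_{\ell}$, I would take $\ell_{\mathcal{M}}(x_i,y_i)=\log(1+e^{-z_i})$ with $z_i = y_i w^\top h_{\mathcal{M}}(x_i)$ and apply the chain rule to obtain a row-vector Jacobian $J_{h_{\mathcal{M}}(x_i)}(\ell_{\mathcal{M}}) = -\tfrac{y_i}{1+e^{z_i}}\, w^\top$. The spectral norm of a $1\times d$ matrix coincides with its Euclidean norm, and $|y_i|=1$, so one immediately reads off $\mathcal{C}_{\ell} = \tfrac{\|w\|}{1+e^{z_i}}$. The equivalent form $(1-p_{\mathcal{M}}(y_i|x_i))\,\|w\|$ then follows from $\sigma(z_i)=p_{\mathcal{M}}(y_i|x_i)$.

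For $\mathcal{C}_g$, I would start from $\nabla_w \ell_{\mathcal{M}}(x_i,y_i) = -\tfrac{y_i}{1+e^{z_i}}\, h_{\mathcal{M}}(x_i)$ and differentiate this vector-valued function of $h_{\mathcal{M}}(x_i)$ via the product rule. Differentiating $h_{\mathcal{M}}(x_i)$ itself yields a scaled identity $-\tfrac{y_i}{1+e^{z_i}}\, I$. Differentiating the scalar coefficient $\tfrac{1}{1+e^{z_i}}$ uses $\tfrac{d}{du}\sigma(u)=\sigma(u)(1-\sigma(u))$ together with $\nabla_{h_{\mathcal{M}}(x_i)} z_i = y_i w$; after the two $y_i$ factors combine to give $y_i^2 = 1$, this contributes a rank-one piece $\tfrac{e^{z_i}}{(1+e^{z_i})^2}\, h_{\mathcal{M}}(x_i) w^\top$. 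The triangle inequality for spectral norms, together with $\|I\|=1$ and the identity $\|uv^\top\|=\|u\|\|v\|$ for an outer product, then produces exactly the upper bound claimed.

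The main obstacle I anticipate is not the algebra but the bookkeeping: correctly tracking the two $y_i$ factors (one from $\partial z_i / \partial h_{\mathcal{M}}(x_i)$ and one already present in $\nabla_w \ell_{\mathcal{M}}$), ensuring the signs combine so that the rank-one term acquires the $\tfrac{e^{z_i}}{(1+e^{z_i})^2}$ coefficient rather than a cancellation, and recognizing the identity-plus-rank-one decomposition so that the outer-product norm identity gives a clean bound. With that structure in place, the rest is an elementary application of the triangle inequality.
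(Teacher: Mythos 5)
Your proposal is correct and follows essentially the same route as the paper's own proof: compute the Jacobians by the chain/product rule, note the identity-plus-rank-one structure of $J_{h_{\mathcal{M}}(x_i)}(\nabla_w \ell_{\mathcal{M}})$, and bound its spectral norm via the triangle inequality and $\|u v^\top\| = \|u\|\,\|v\|$. Note also that your result $\mathcal{C}_{\ell} = \frac{\|w\|}{1+e^{z_i}} = (1-p_{\mathcal{M}}(y_i|x_i))\|w\|$ agrees with the paper's derivation and its main-text statement; the factor $\frac{e^{z_i}}{1+e^{z_i}}$ appearing in the restated theorem above is a typo.
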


\begin{proof}
    We first prove the discriminative power of losses.
    Let $\sigma(z_i) = \frac{1}{1+e^{-z_i}}$, we have
    \begin{IEEEeqnarray}{rll}
        \frac{\partial \ell_{\mathcal{M}}(x_i, y_i)}{\partial h_{\mathcal{M}}(x_i)}
        &= -\frac{\partial \log \sigma(z_i)}{\partial z_i}
        \frac{\partial z_i}{\partial h_{\mathcal{M}}(x_i)} 
        \\
        &= -(1-\sigma(z_i)) y_i w \\
        &= -\frac{1}{1+e^{z_i}} w \\ 
        &= -(1 - p_{\mathcal{M}}(y_i|x_i)) w.
    \end{IEEEeqnarray}
    Therefore, 
    \begin{IEEEeqnarray}{rll}
        \mathcal{C}_{\ell} 
        = \|\frac{\partial \ell_{\mathcal{M}}(x_i, y_i)}{\partial h_{\mathcal{M}}(x_i)}\|
        = \frac{1}{1+e^{z_i}} \|w\|.
    \end{IEEEeqnarray}

    Next, we prove the discriminative power of gradients.
    We have 
    \begin{IEEEeqnarray}{rll}
        &\frac{\partial \nabla_{w} \ell_{\mathcal{M}}(x_i, y_i)}{\partial h_{\mathcal{M}}(x_i)} \\
        = &\frac{\partial ((y_i h_{\mathcal{M}}(x_i))/(1+e^{z_i}))}{\partial h_{\mathcal{M}}(x_i)} \\
        = &\frac{y_i}{1+e^{z_i}} I - \frac{y_i h_{\mathcal{M}}(x_i) e^{z_i}}{(1+e^{z_i})^2} 
        \frac{\partial z}{\partial h_{\mathcal{M}}(x_i)} \\
        = &\frac{y_i}{1+e^{z_i}} I - \frac{e^{z_i}}{(1+e^{z_i})^2} h_{\mathcal{M}}(x_i) w^{\top}.
    \end{IEEEeqnarray}
    Therefore, 
    \begin{IEEEeqnarray}{lll}
        &\mathcal{C}_g 
        = \|\frac{\partial \nabla_{w} \ell_{\mathcal{M}}(x_i, y_i)}{\partial h_{\mathcal{M}}(x_i)}\| \\
        &= \|\frac{1}{1+e^{z_i}} I - \frac{e^{z_i}}{(1+e^{z_i})^2} h_{\mathcal{M}}(x_i) w^{\top}\| \\
        &\leq \| \frac{1}{1+e^{z_i}} I \| + \|\frac{e^{z_i}}{(1+e^{z_i})^2} h_{\mathcal{M}}(x_i) w^{\top}\| \\
        &= \frac{1}{1+e^{z_i}} + \frac{e^{z_i}}{(1+e^{z_i})^2} \| h_{\mathcal{M}}(x_i) \| \|w\|.
    \end{IEEEeqnarray}
\end{proof}

\subsection{Proof of Corollary~\ref{corollary:gradient_peak}}\label{app_sec:proof_gradient_peak}

We first restate the corollary for reference.

\begin{corollary}
    Let $\|w\| \|h_{\mathcal{M}}(x_i)\| = \alpha$.
    When $\alpha$ is smaller than the positive root of function 
    $-x (1-e^{x}) = 1 + e^{x}$ ($\sim 1.544$), 
    $\mathcal{C}_g$ monotonically decreases as $z_i$ increases, 
    which is similar to $\mathcal{C}_{\ell}$.
    However, when $\alpha$ is larger, 
    $\mathcal{C}_g$ increases with $z_i$ when $z_i \leq \log(\frac{\alpha-1}{\alpha+1})$, 
    which is negatively close to the decision boundary,
    and decreases when $z_i > \log(\frac{\alpha-1}{\alpha+1})$.
\end{corollary}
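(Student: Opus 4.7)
The plan is to reduce the corollary to a calculus exercise on the explicit upper bound for $\mathcal{C}_g$ supplied by Theorem~\ref{theorem:discriminative}. Define $f(z) := \frac{1}{1+e^z} + \frac{\alpha\, e^z}{(1+e^z)^2}$ so that $\mathcal{C}_g \le f(z)$; I would then study the monotonicity of $f$ as a function of $z$, treating its qualitative behavior as a proxy for that of $\mathcal{C}_g$.

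First I would compute $f'(z)$ directly. Using $\frac{d}{dz}\bigl[\tfrac{1}{1+e^z}\bigr] = -\tfrac{e^z}{(1+e^z)^2}$ and $\frac{d}{dz}\bigl[\tfrac{e^z}{(1+e^z)^2}\bigr] = \tfrac{e^z(1-e^z)}{(1+e^z)^3}$, combining and pulling out the positive factor $\tfrac{e^z}{(1+e^z)^3}$ gives $f'(z) = \tfrac{e^z}{(1+e^z)^3}\bigl[(\alpha-1) - (\alpha+1)\, e^z\bigr]$. The sign of $f'(z)$ is thus controlled entirely by the linear expression $(\alpha-1) - (\alpha+1)\,e^z$ in the positive variable $e^z$, which makes both cases of the corollary fall out cleanly.

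Setting $f'(z) = 0$ yields the unique candidate $e^z = (\alpha-1)/(\alpha+1)$, which is positive only when $\alpha > 1$. In that regime I would set $z^* := \log\frac{\alpha-1}{\alpha+1}$; a one-line sign check shows $f' > 0$ for $z < z^*$ and $f' < 0$ for $z > z^*$, yielding exactly the peak location claimed in the second case of the corollary. When the candidate is non-positive, $(\alpha-1)-(\alpha+1)e^z < 0$ for every $z$, so $f$ is strictly decreasing on $\mathbb{R}$, which recovers the first case.

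The most delicate step is pinning down the stated threshold \emph{value} as the positive root of $-x(1-e^x) = 1+e^x$, numerically $\approx 1.544$, rather than the naive threshold $\alpha = 1$ that comes from mere existence of a positive critical point. Substituting $e^{z^*} = (\alpha-1)/(\alpha+1)$ along with the identities $1-e^{z^*} = 2/(\alpha+1)$ and $1+e^{z^*} = 2\alpha/(\alpha+1)$ shows that this transcendental equation is equivalent to $z^*(\alpha) = -\alpha$. The hard part is then to argue that crossing this specific $\alpha$---not the weaker $\alpha = 1$---is precisely when the interior peak of the upper bound becomes the \emph{binding} contribution to the exact spectral norm $\|\tfrac{1}{1+e^z} I - \tfrac{e^z}{(1+e^z)^2}\, h\, w^{\top}\|$, rather than being masked by the scaled-identity term; pushing this refinement through without additional alignment assumptions on $h$ and $w$ is what I expect to be the main obstacle.
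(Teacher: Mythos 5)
Your derivative computation is exactly the paper's: the paper also differentiates the upper bound $f(z_i)=\frac{1}{1+e^{z_i}}+\frac{\alpha e^{z_i}}{(1+e^{z_i})^2}$ (identifying $\mathcal{C}_g$ with this bound throughout) and obtains the same factorization $f'(z_i)=\frac{e^{z_i}}{(1+e^{z_i})^3}\bigl((\alpha-1)-(\alpha+1)e^{z_i}\bigr)$, the same critical point $z^\ast=\log\frac{\alpha-1}{\alpha+1}$, and the same sign analysis. Where your proposal has a genuine gap is the last step, and you are chasing the wrong difficulty there. The threshold $\alpha\approx 1.544$ has nothing to do with when the interior peak of the upper bound becomes the ``binding'' contribution to the exact spectral norm $\|\frac{1}{1+e^{z_i}}I-\frac{e^{z_i}}{(1+e^{z_i})^2}h_{\mathcal{M}}(x_i)w^{\top}\|$; the paper never attempts such a tightness argument, and as you suspect it would require alignment assumptions on $h_{\mathcal{M}}(x_i)$ and $w$ that are not available.

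The missing idea is a domain restriction: since $z_i=y_iw^{\top}h_{\mathcal{M}}(x_i)=\alpha\cos\phi$ with $\phi$ the angle between $w$ and $h_{\mathcal{M}}(x_i)$, Cauchy--Schwarz gives $z_i\in[-\alpha,\alpha]$, so the monotonicity statement must be read over this interval rather than over all of $\mathbb{R}$. For $\alpha>1$ the critical point $z^\ast$ exists on $\mathbb{R}$, but it is only \emph{attainable} when $z^\ast>-\alpha$, i.e. $\log\frac{\alpha-1}{\alpha+1}>-\alpha$; rearranging (as you already did when you showed the transcendental equation is equivalent to $z^\ast(\alpha)=-\alpha$) this is exactly $-\alpha(1-e^{\alpha})=1+e^{\alpha}$ at the boundary, whose positive root is $\approx 1.544$. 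Thus for $1<\alpha\lesssim 1.544$ the peak lies below the smallest achievable value of $z_i$, and on $[-\alpha,\alpha]$ the bound is still monotonically decreasing, which is why the first case of the corollary extends past the naive threshold $\alpha=1$ up to $1.544$. With that one observation your argument closes exactly as the paper's does; without it, your plan only establishes the dichotomy at $\alpha=1$ and leaves the stated threshold unproven.
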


\begin{proof}
    
To study the behavior of $\mathcal{C}_g$ with $z_i$, 
we take the derivative of $\mathcal{C}_g$ with respect to $z_i$,
\begin{IEEEeqnarray}{rll}
    &\frac{\partial \mathcal{C}_g}{\partial z_i} 
    = \frac{\partial}{\partial z_i} (\frac{1}{1+e^{z_i}} + \frac{e^{z_i}}{(1+e^{z_i})^2} \alpha) \\
    &= -\frac{e^{z_i}}{(1+e^{z_i})^2} + \frac{e^{z_i}}{(1+e^{z_i})^2} \alpha - \frac{2e^{2z_i}}{(1+e^{z_i})^3} \alpha \notag \\
    &= \frac{e^{z_i}}{(1+e^{z_i})^2} (-1 + \alpha - \frac{2e^{z_i}}{1+e^{z_i}} \alpha) \notag \\
    &= \frac{e^{z_i}}{(1+e^{z_i})^3} (\alpha(1 - e^{z_i}) - (1+e^{z_i})) \notag \\ 
    &= \frac{e^{z_i}}{(1+e^{z_i})^3} ((\alpha - 1) - (\alpha + 1)e^{z_i}).\label{eq:alpha_cond}
\end{IEEEeqnarray}
For the domain of $z_i$, we know that 
\begin{IEEEeqnarray}{rll}
    z_i = y_i w^{\top} h_{\mathcal{M}}(x_i) = \alpha \cos(\phi) \in [-\alpha, \alpha]
\end{IEEEeqnarray}
, where $\phi$ is the angle between $w$ and $h_{\mathcal{M}}(x_i)$.

When $z_i > 0$, i.e., the prediction is correct,
$\partial \mathcal{C}_g/\partial z_i < 0$. 
Therefore, $\mathcal{C}_g$ monotonically decreases as $z_i$ increases. 

When $z_i < 0$, i.e., the prediction is incorrect: 
\begin{itemize}
    \item If $0 < \alpha \leq 1$, $\alpha - 1 \leq 0$. 
    From Eq.~\ref{eq:alpha_cond}, 
    we know that $\partial \mathcal{C}_g/\partial z_i < 0$. 
    Therefore, $\mathcal{C}_g$ monotonically decreases as $z_i$ increases.
    \item If $\alpha > 1$, when $z_i \leq \log(\frac{\alpha-1}{\alpha+1})$,
    $\partial \mathcal{C}_g/\partial z_i > 0$.
    Therefore, $\mathcal{C}_g$ increases with 
    $z_i \in [-\alpha, \log(\frac{\alpha-1}{\alpha+1}))$ (if exists), 
    then decreases when $z_i \geq \log(\frac{\alpha-1}{\alpha+1})$.
    To make sure the range exists, 
    we need $\log(\frac{\alpha-1}{\alpha+1}) > -\alpha$, 
    which is equivalent to $\alpha > \sim 1.544$.
    Otherwise, similar to the case of $\alpha \leq 1$,
    $\mathcal{C}_g$ monotonically decreases as $z_i$ increases.
\end{itemize}

\end{proof}

\section{Additional Results}\label{app_sec:additional_results}

\begin{figure*}[t]
    \centering
    \begin{subfigure}[b]{0.32\textwidth}
        \centering
        \includegraphics[width=\linewidth]{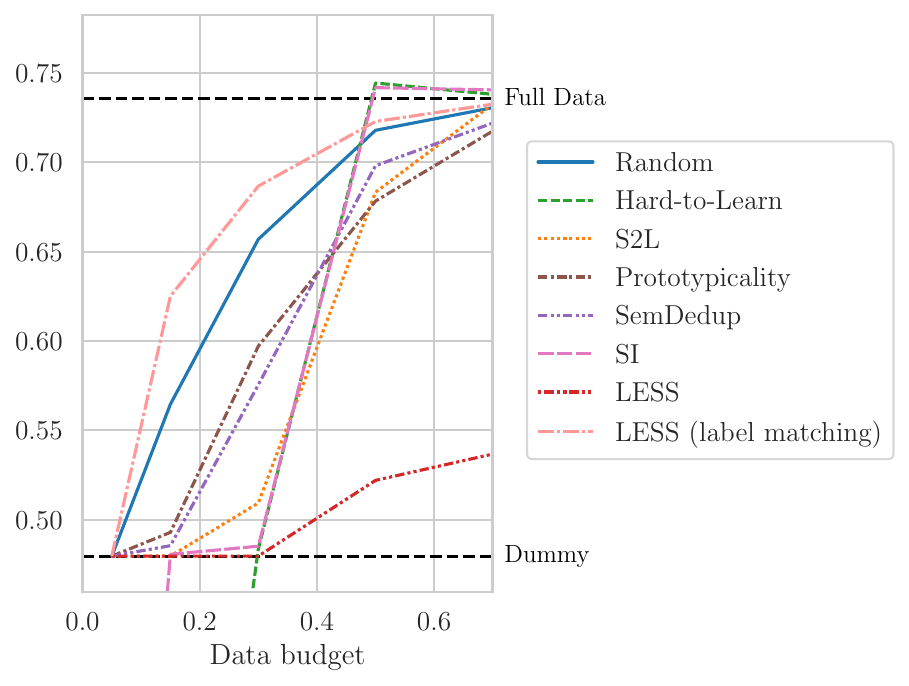}
        \caption{F1 scores of $\text{DeBERTaV3}_{\text{Base}}$ on CAD}
        \label{fig:retrain-cad-debertav3-base}
    \end{subfigure}
    \hfill
    \begin{subfigure}[b]{0.32\textwidth}
        \centering
        \includegraphics[width=\linewidth]{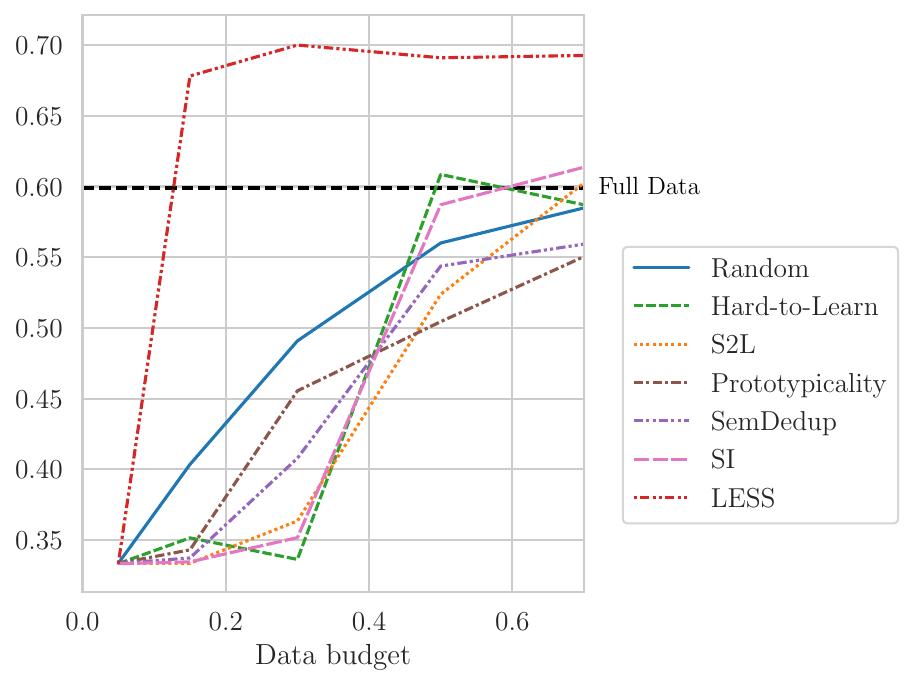}
        \caption{F1 scores of $\text{DeBERTaV3}_{\text{Base}}$ on DynaHate}
        \label{fig:retrain-dynahate-debertav3-base}
    \end{subfigure}
    \hfill
    \begin{subfigure}[b]{0.32\textwidth}
        \centering
        \includegraphics[width=\linewidth]{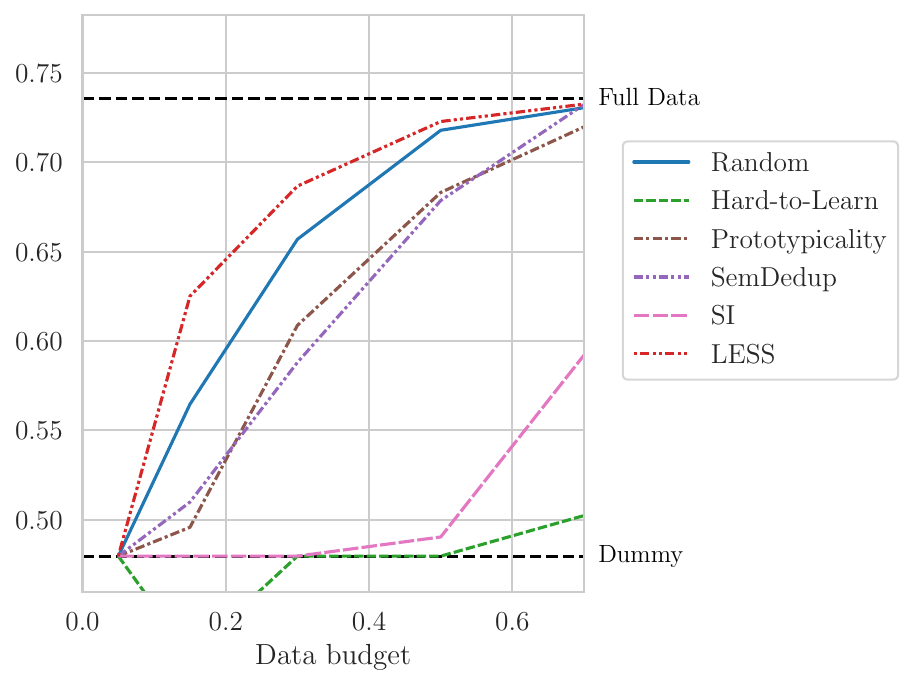}
        \captionsetup{font=small}
        \caption{F1 scores of $\text{DeBERTaV3}_{\text{Base}}$ on CAD with label balancing}
        \label{fig:retrain-cad-debertav3-base-balanced}
    \end{subfigure}
    \par\bigskip
    \begin{subfigure}[b]{0.32\textwidth}
        \centering
        \includegraphics[width=\linewidth]{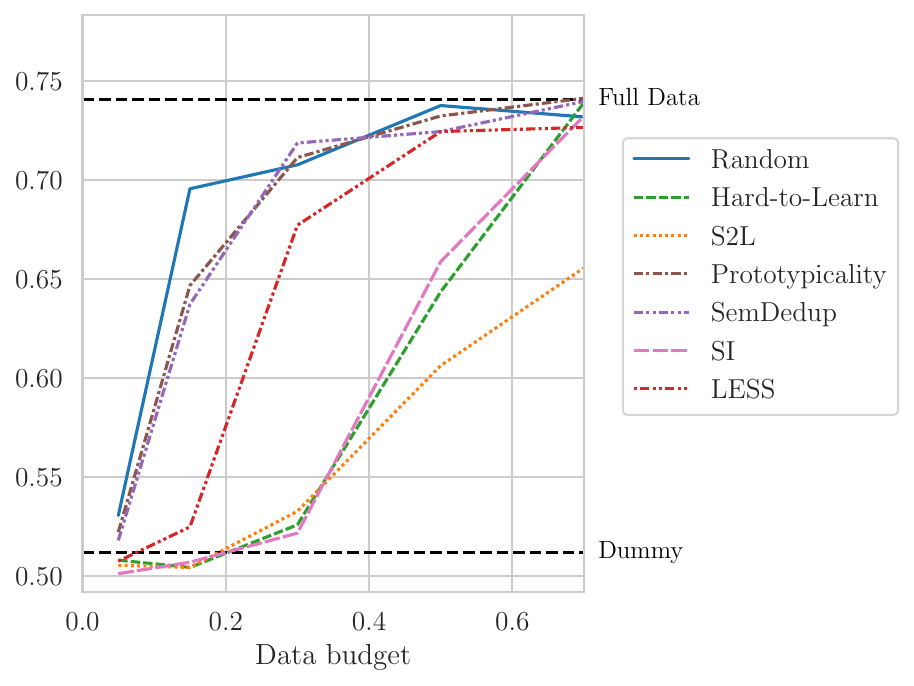}
        \caption{Accuracy of $\text{DeBERTaV3}_{\text{Base}}$ on WinoGrande}
        \label{fig:retrain-winogrande-debertav3-base}
    \end{subfigure}
    \hfill
    \begin{subfigure}[b]{0.32\textwidth}
        \centering
        \includegraphics[width=\linewidth]{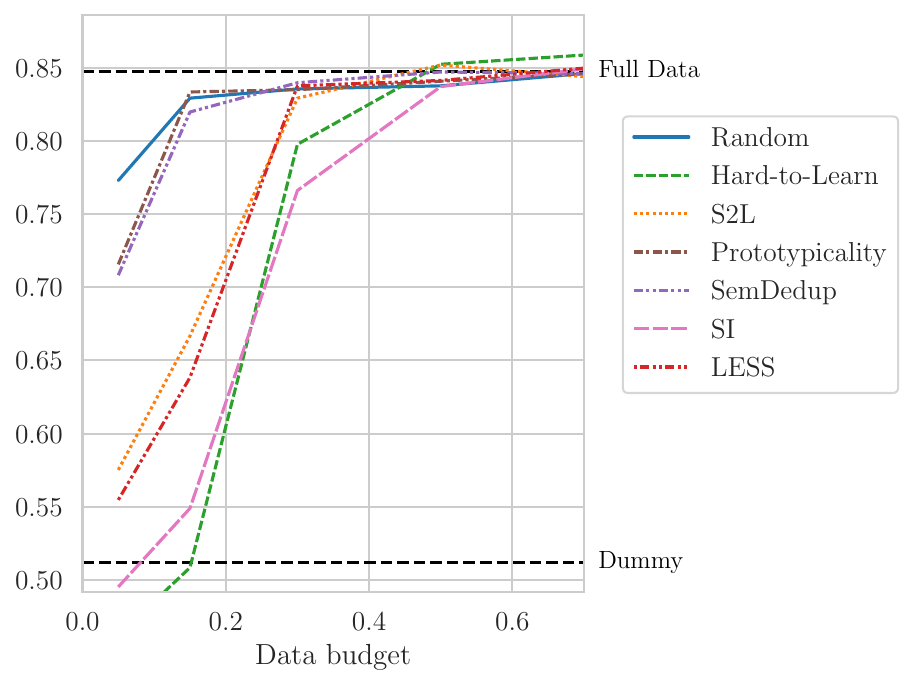}
        \caption{Accuracy of $\text{DeBERTaV3}_{\text{Large}}$ on WinoGrande}
        \label{fig:retrain-winogrande-debertav3-large}
    \end{subfigure}
    \hfill
    \begin{subfigure}[b]{0.32\textwidth}
        \centering
        \includegraphics[width=\linewidth]{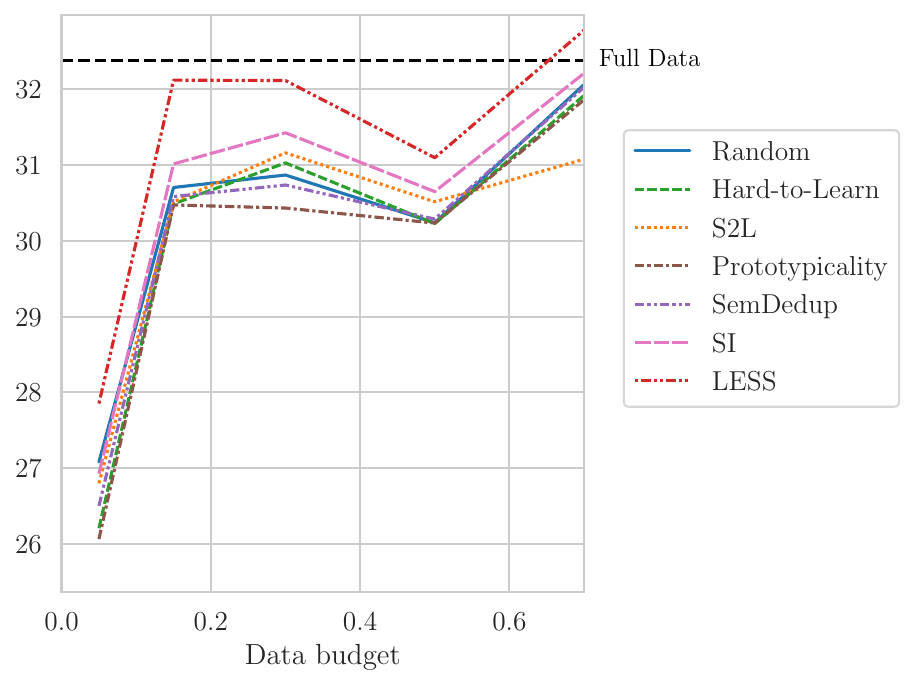}
        \caption{Rouge-L scores of OPT-125M on DialogSum}
        \label{fig:retrain-dialogsum-opt-125m}
    \end{subfigure}
    \par\bigskip
    \begin{subfigure}[b]{0.32\textwidth}
        \centering
        \includegraphics[width=\linewidth]{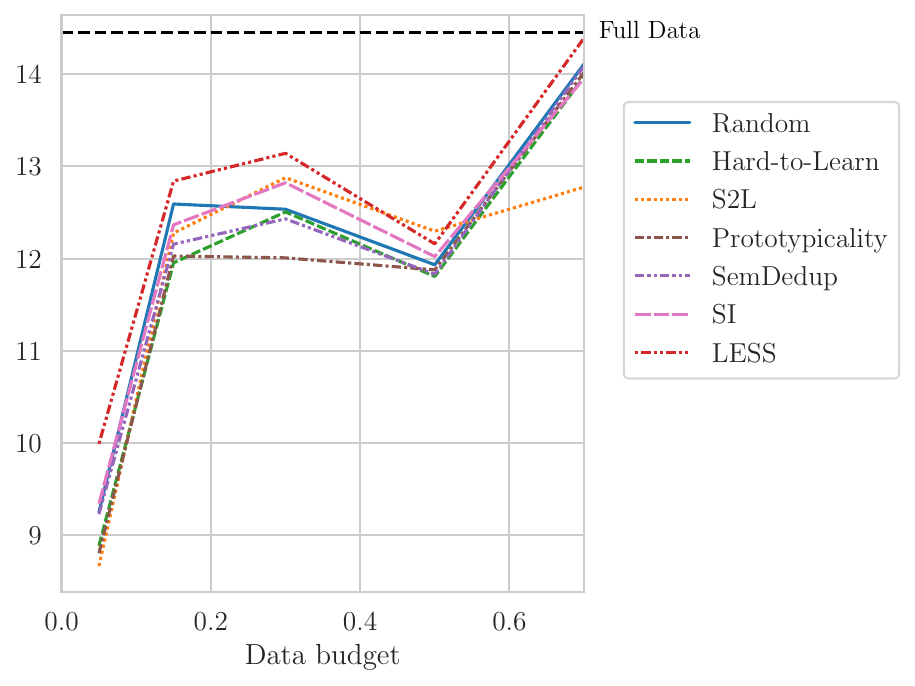}
        \caption{Rouge-2 scores of OPT-125M on DialogSum}
        \label{fig:retrain-dialogsum-opt-125m-rouge-2}
    \end{subfigure}
    \hfill
    \begin{subfigure}[b]{0.32\textwidth}
        \centering
        \includegraphics[width=\linewidth]{figures/iclr/retrain/existing/dialogsum-dialogsum-opt-125m-rouge-2.pdf}
        \caption{Rouge-2 scores of OPT-350M on DialogSum}
        \label{fig:retrain-dialogsum-opt-350m-rouge-2}
    \end{subfigure}
    \hfill
    \begin{subfigure}[b]{0.32\textwidth}
        \centering
        \includegraphics[width=\linewidth]{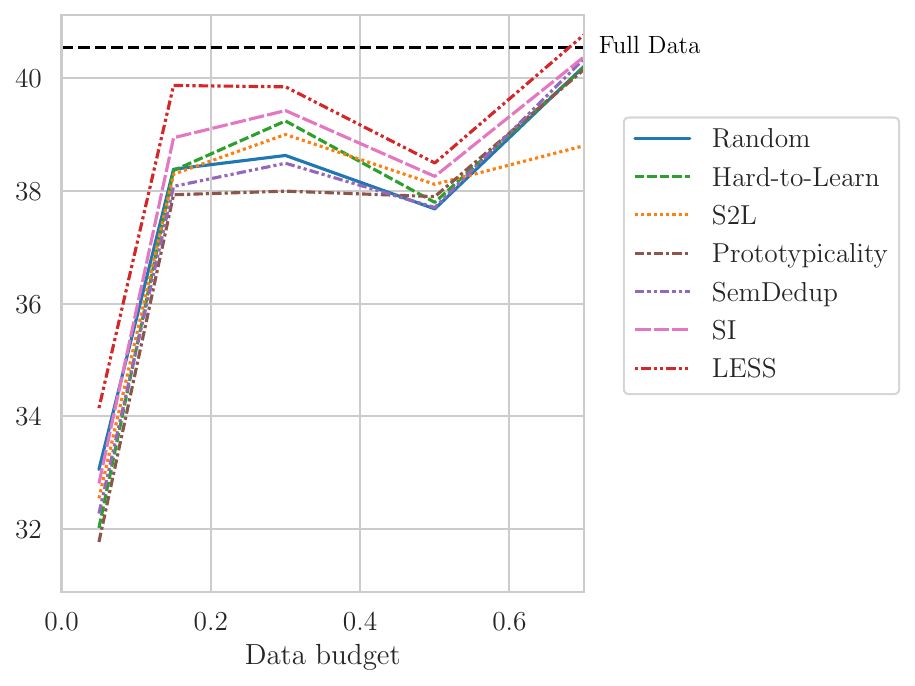}
        \caption{Rouge-1 scores of OPT-125M on DialogSum}
        \label{fig:retrain-dialogsum-opt-125m-rouge-1}
    \end{subfigure}
    \par\bigskip
    \begin{subfigure}[b]{0.32\textwidth}
        \centering
        \includegraphics[width=\linewidth]{figures/iclr/retrain/existing/dialogsum-dialogsum-opt-125m-rouge-1.pdf}
        \caption{Rouge-1 scores of OPT-350M on DialogSum}
        \label{fig:retrain-dialogsum-opt-350m-rouge-1}
    \end{subfigure}
    
    \caption{
        Model performance under different data budgets. 
        }
    \label{fig:retrain_add}
\end{figure*}
\FloatBarrier

\begin{figure*}[ht]
    \centering
    \begin{subfigure}[t]{0.32\textwidth}
        \centering
        \includegraphics[width=\linewidth]{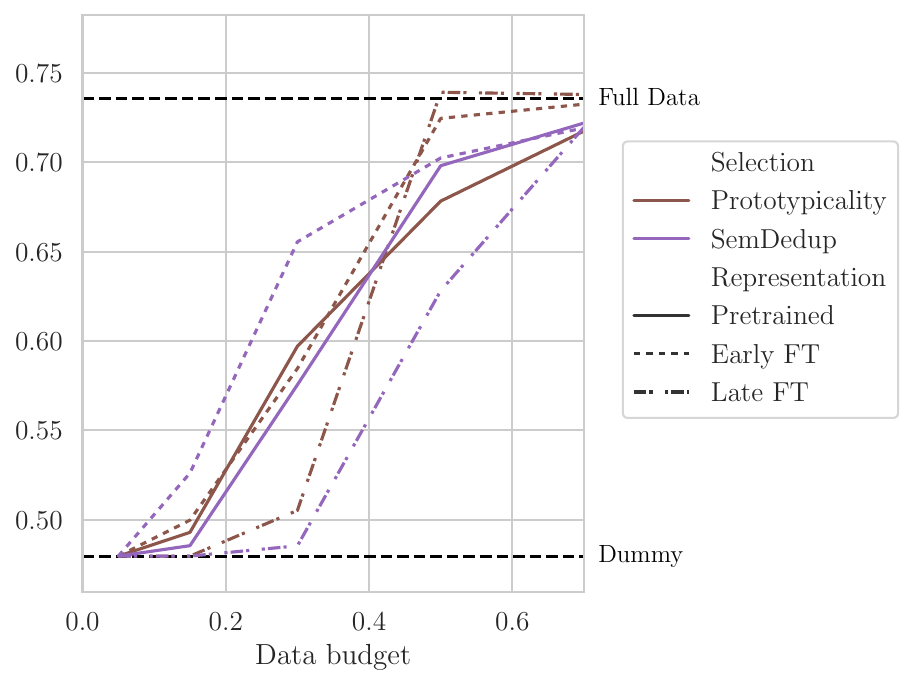}
        \caption{Pretrained vs. Fine-Tuned (FT) Hidden States: $\text{DeBERTaV3}_{\text{Base}}$ on CAD}
        \label{fig:ft-hiddenstate-cad-debertav3-base}
    \end{subfigure}
    \hfill
    \begin{subfigure}[t]{0.32\textwidth}
        \centering
        \includegraphics[width=\linewidth]{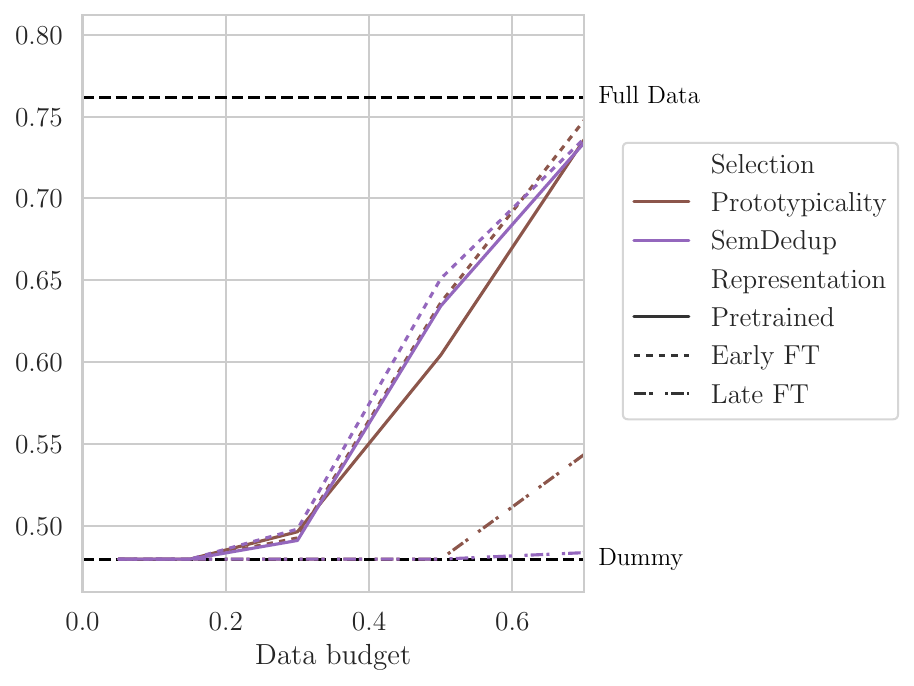}
        \caption{Pretrained vs. Fine-Tuned (FT) Hidden States: $\text{DeBERTaV3}_{\text{Large}}$ on CAD}
        \label{fig:ft-hiddenstate-cad-debertav3-large}
    \end{subfigure}
    \hfill
    \begin{subfigure}[t]{0.32\textwidth}
        \centering
        \includegraphics[width=\linewidth]{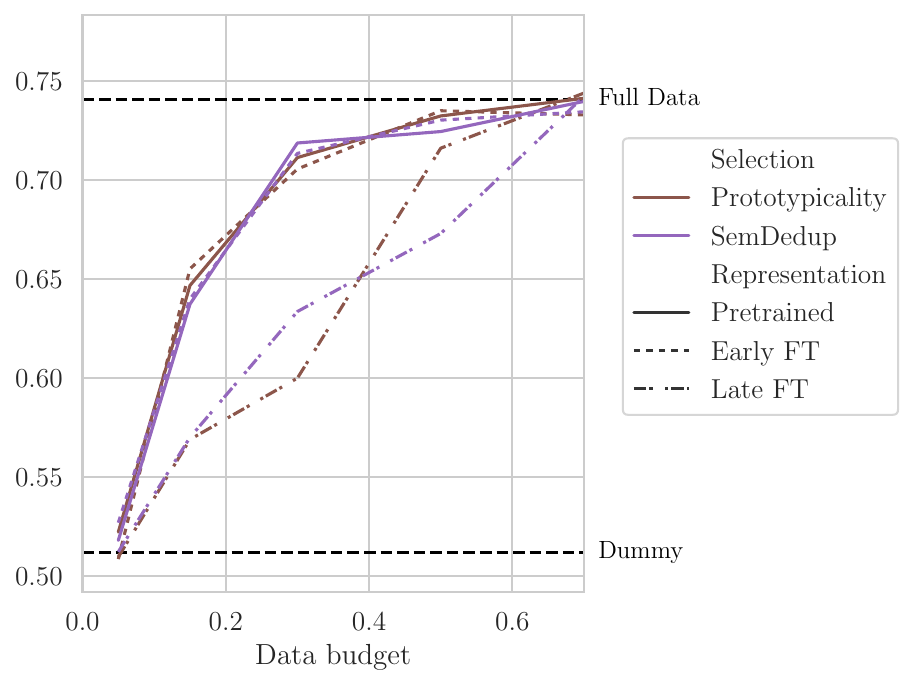}
        \caption{Pretrained vs. Fine-Tuned (FT) Hidden States: $\text{DeBERTaV3}_{\text{Base}}$ on WinoGrande}
        \label{fig:ft-hiddenstate-winogrande-debertav3-base}
    \end{subfigure}
    \par\bigskip
    \begin{subfigure}[t]{0.32\textwidth}
        \centering
        \includegraphics[width=\linewidth]{figures/iclr/retrain/fthiddenstate/winogrande-winogrande-debertav3-large-accuracy.pdf}
        \caption{Pretrained vs. Fine-Tuned (FT) Hidden States: $\text{DeBERTaV3}_{\text{Large}}$ on WinoGrande}
        \label{fig:ft-hiddenstate-winogrande-debertav3-large}
    \end{subfigure}
    \hfill
    \begin{subfigure}[t]{0.32\textwidth}
        \centering
        \includegraphics[width=\linewidth]{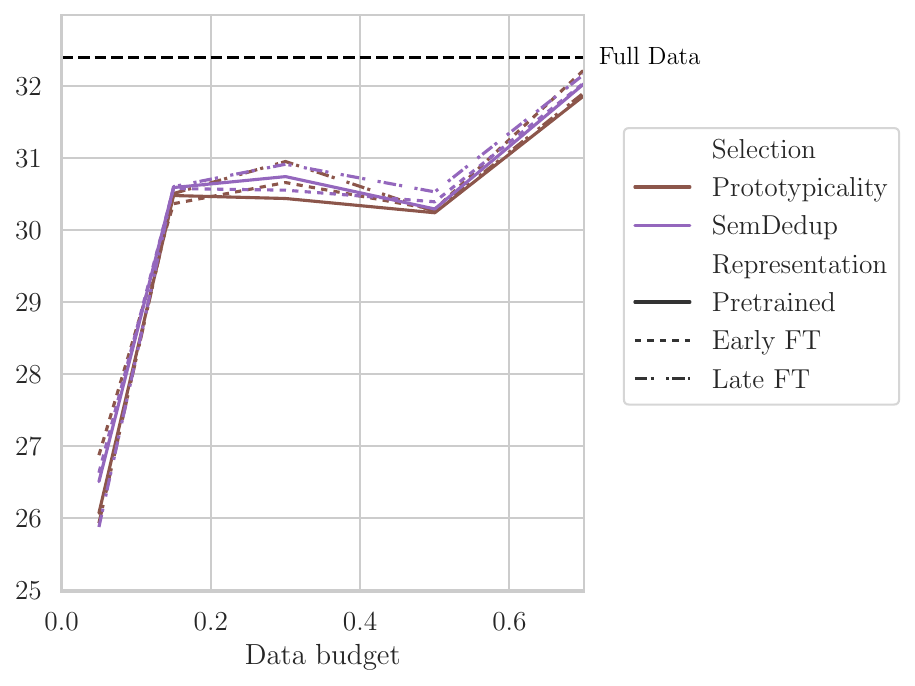}
        \caption{Pretrained vs. Fine-Tuned (FT) Hidden States: OPT-125M on DialogSum (Rouge-L)}
        \label{fig:ft-hiddenstate-dialogsum-opt-125m-rouge-l}
    \end{subfigure}
    \hfill
    \begin{subfigure}[t]{0.32\textwidth}
        \centering
        \includegraphics[width=\linewidth]{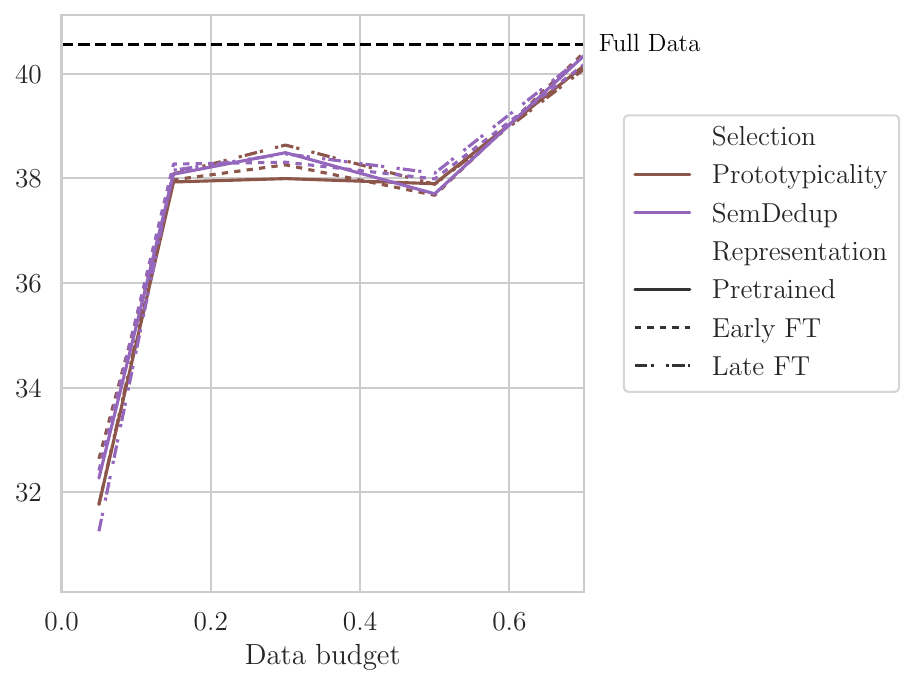}
        \caption{Pretrained vs. Fine-Tuned (FT) Hidden States: OPT-125M on DialogSum (Rouge-1)}
        \label{fig:ft-hiddenstate-dialogsum-opt-125m-rouge-1}
    \end{subfigure}
    \par\bigskip
    \begin{subfigure}[t]{0.32\textwidth}
        \centering
        \includegraphics[width=\linewidth]{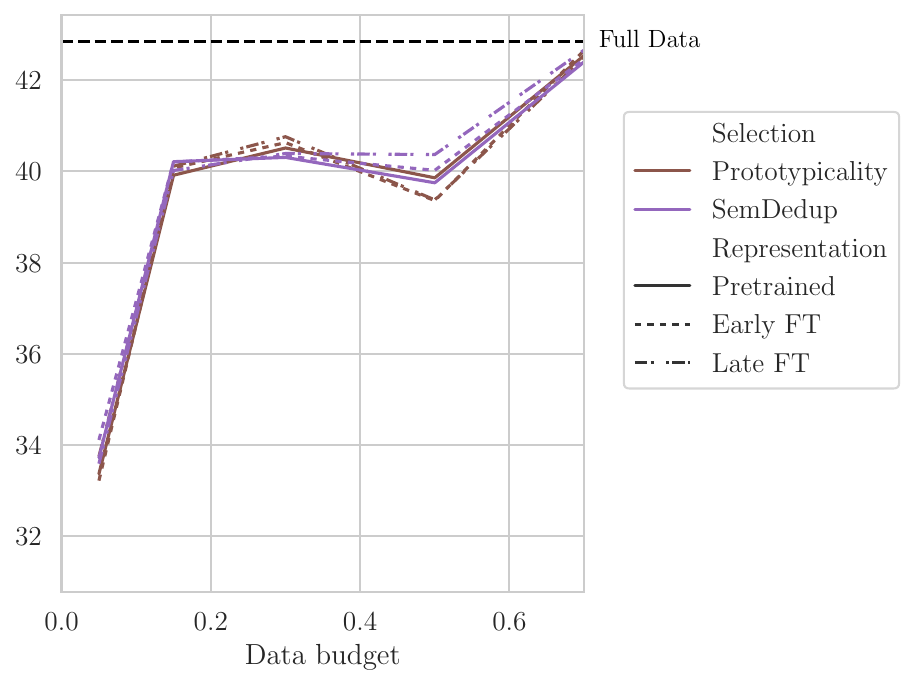}
        \caption{Pretrained vs. Fine-Tuned (FT) Hidden States: OPT-350M on DialogSum (Rouge-1)}
        \label{fig:ft-hiddenstate-dialogsum-opt-350m-rouge-1}
    \end{subfigure}
    \hfill
    \begin{subfigure}[t]{0.32\textwidth}
        \centering
        \includegraphics[width=\linewidth]{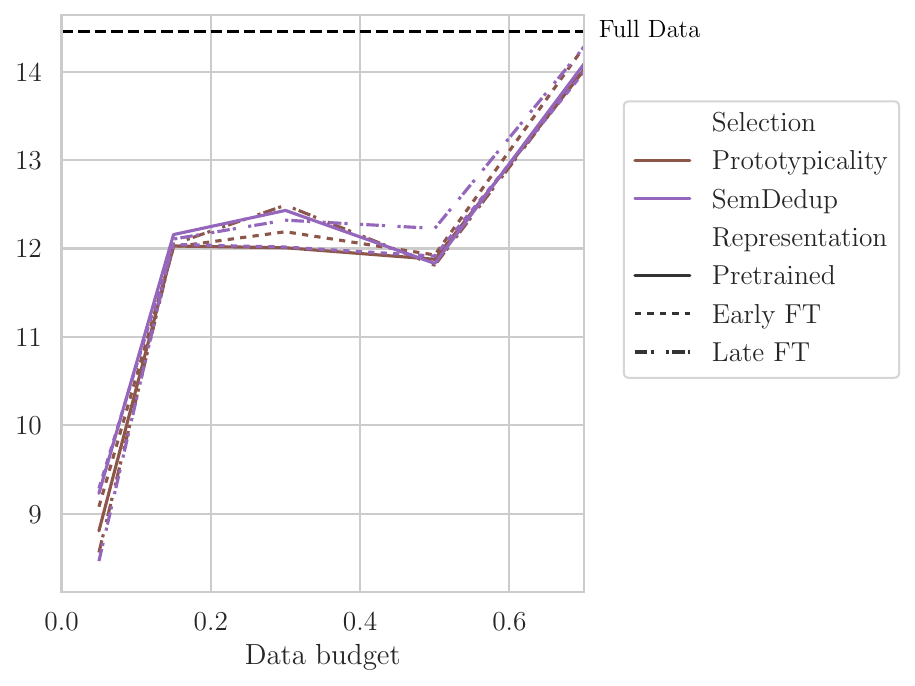}
        \caption{Pretrained vs. Fine-Tuned (FT) Hidden States: OPT-125M on DialogSum (Rouge-2)}
        \label{fig:ft-hiddenstate-dialogsum-opt-125m-rouge-2}
    \end{subfigure}
    \hfill
    \begin{subfigure}[t]{0.32\textwidth}
        \centering
        \includegraphics[width=\linewidth]{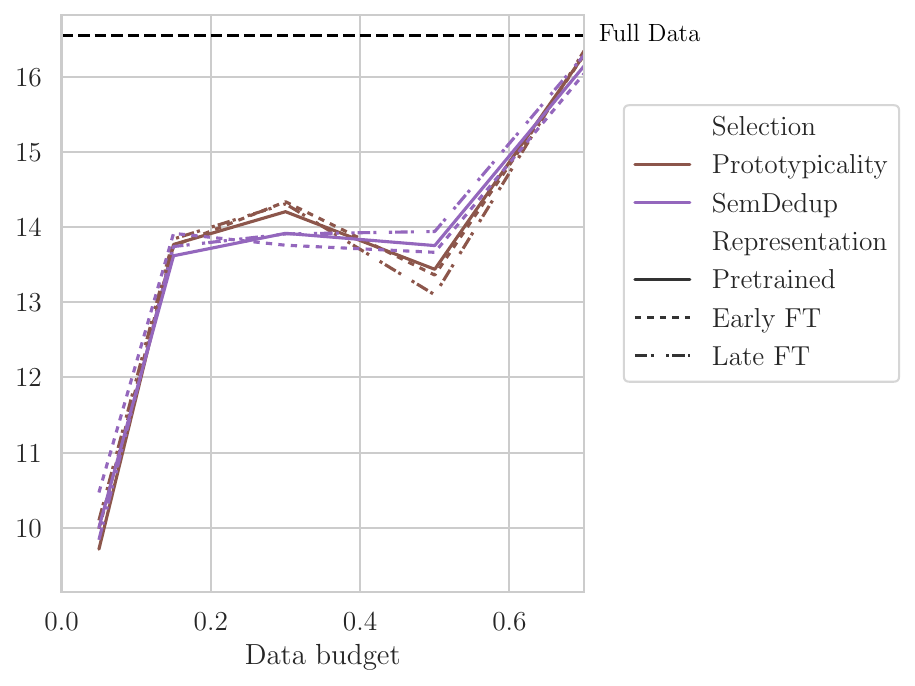}
        \caption{Pretrained vs. Fine-Tuned (FT) Hidden States: OPT-350M on DialogSum (Rouge-2)}
        \label{fig:ft-hiddenstate-dialogsum-opt-350m-rouge-2}
    \end{subfigure}
    \caption{
        Ablation studies on pretrained vs. fine-tuned hidden states. 
        }
    \label{fig:ablation_add_ft_hiddenstate}
\end{figure*}
\FloatBarrier

\begin{figure*}[h]
    \centering
    \begin{subfigure}[t]{0.32\textwidth}
        \centering
        \includegraphics[width=\linewidth]{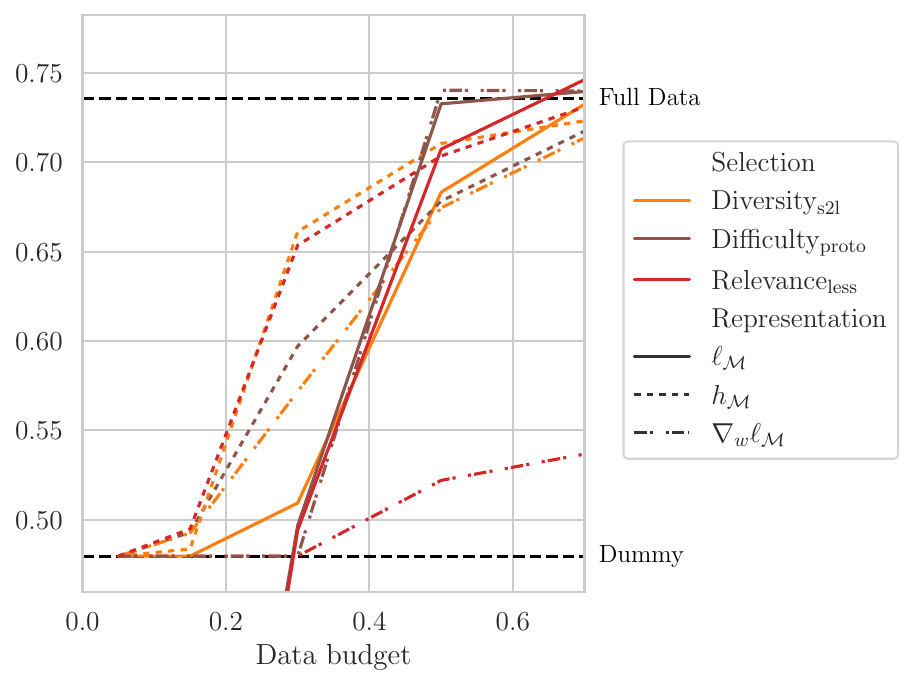}
        \caption{$\text{DeBERTaV3}_{\text{Base}}$ on CAD (F1)}
        \label{fig:sampling-feature-cad-debertav3-base}
    \end{subfigure}
    \hfill
    \begin{subfigure}[t]{0.32\textwidth}
        \centering
        \includegraphics[width=\linewidth]{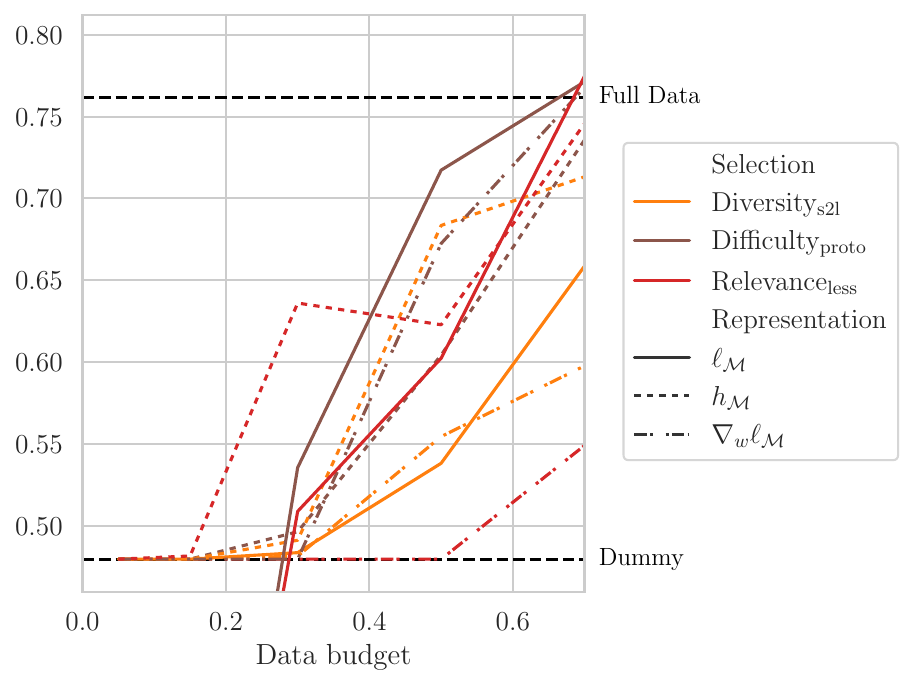}
        \caption{$\text{DeBERTaV3}_{\text{Large}}$ on CAD (F1)}
        \label{fig:sampling-feature-cad-debertav3-large}
    \end{subfigure}
    \hfill
    \begin{subfigure}[t]{0.32\textwidth}
        \centering
        \includegraphics[width=\linewidth]{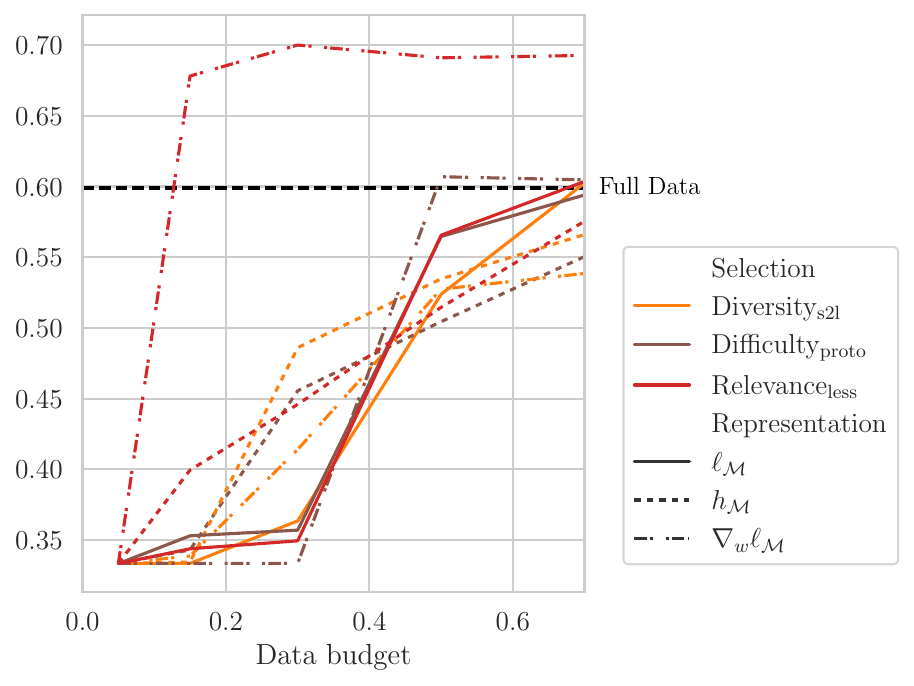}
        \caption{$\text{DeBERTaV3}_{\text{Base}}$ on DynaHate (F1)}
        \label{fig:sampling-feature-dynahate-debertav3-base}
    \end{subfigure}
    \par\bigskip
    \begin{subfigure}[t]{0.32\textwidth}
        \centering
        \includegraphics[width=\linewidth]{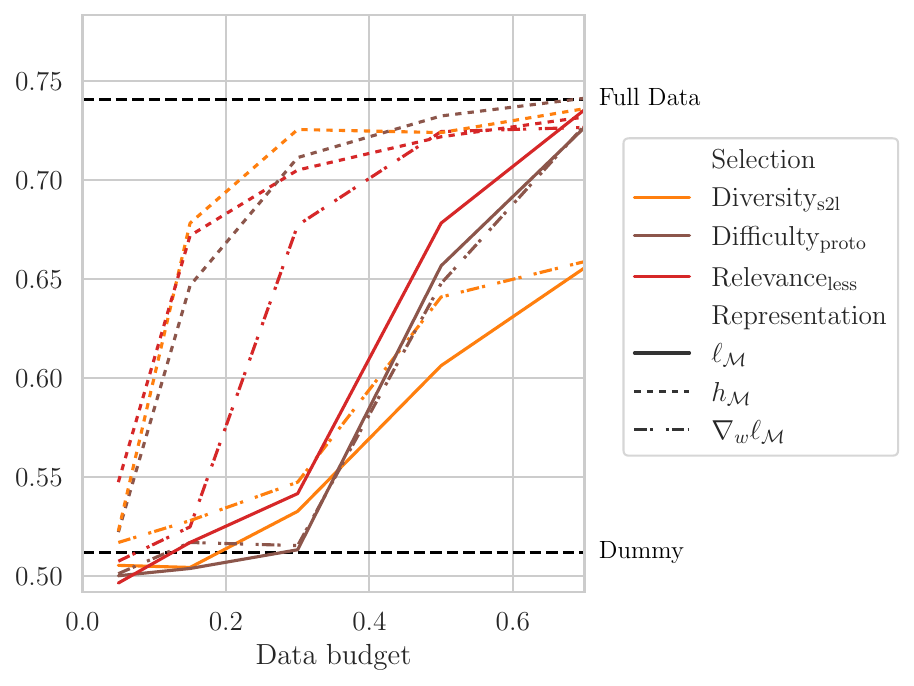}
        \caption{$\text{DeBERTaV3}_{\text{Base}}$ on WinoGrande (Accuracy)}
        \label{fig:sampling-feature-winogrande-debertav3-base}
    \end{subfigure}
    \hfill
    \begin{subfigure}[t]{0.32\textwidth}
        \centering
        \includegraphics[width=\linewidth]{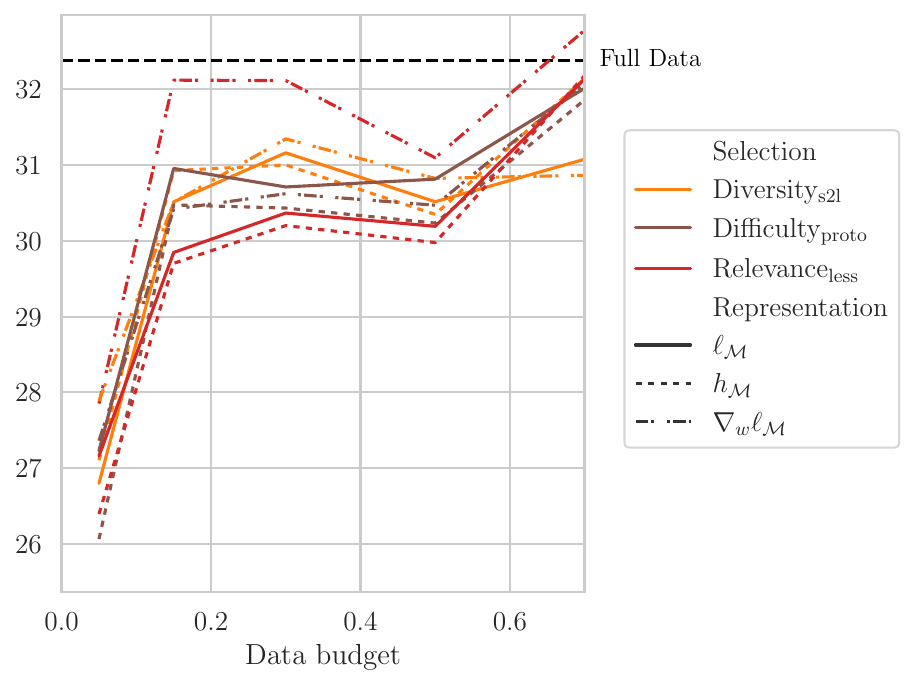}
        \caption{OPT-125M on DialogSum (Rouge-L)}
        \label{fig:sampling-feature-dialogsum-opt-125m-rouge-l}
    \end{subfigure}
    \hfill
    \begin{subfigure}[t]{0.32\textwidth}
        \centering
        \includegraphics[width=\linewidth]{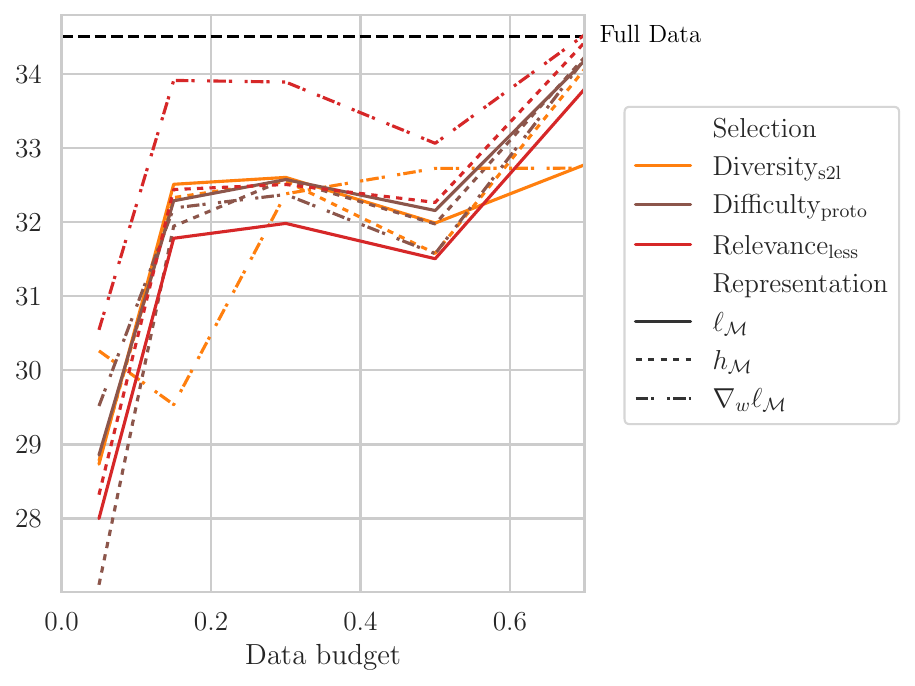}
        \caption{OPT-350M on DialogSum (Rouge-L)}
        \label{fig:sampling-feature-dialogsum-opt-350m-rouge-l}
    \end{subfigure}
    \par\bigskip
    \begin{subfigure}[t]{0.32\textwidth}
        \centering
        \includegraphics[width=\linewidth]{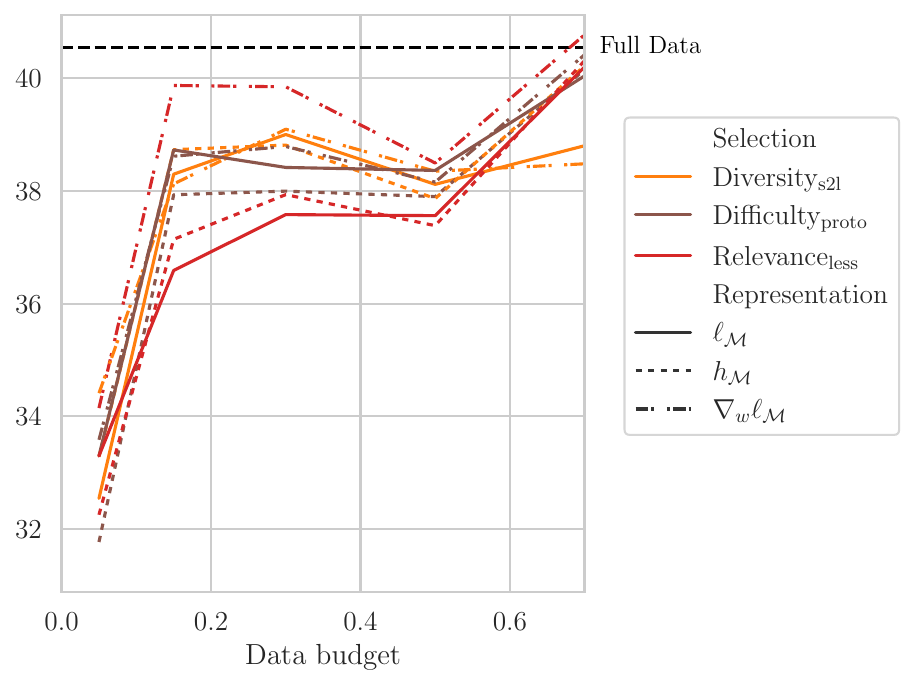}
        \caption{OPT-125M on DialogSum (Rouge-1)}
        \label{fig:sampling-feature-dialogsum-opt-125m-rouge-1}
    \end{subfigure}
    \hfill
    \begin{subfigure}[t]{0.32\textwidth}
        \centering
        \includegraphics[width=\linewidth]{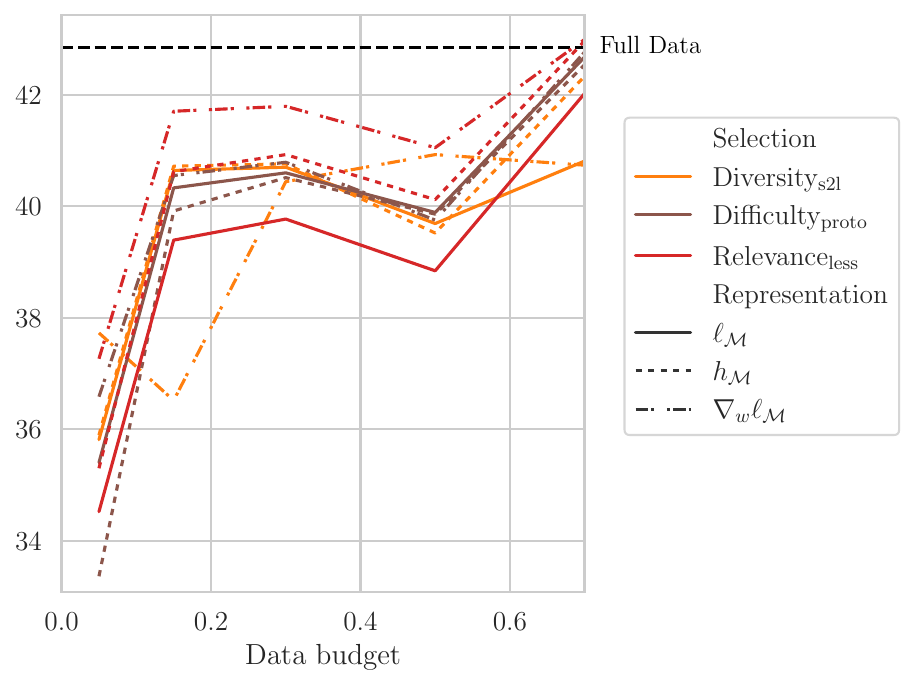}
        \caption{OPT-350M on DialogSum (Rouge-1)}
        \label{fig:sampling-feature-dialogsum-opt-350m-rouge-1}
    \end{subfigure}
    \hfill
    \begin{subfigure}[t]{0.32\textwidth}
        \centering
        \includegraphics[width=\linewidth]{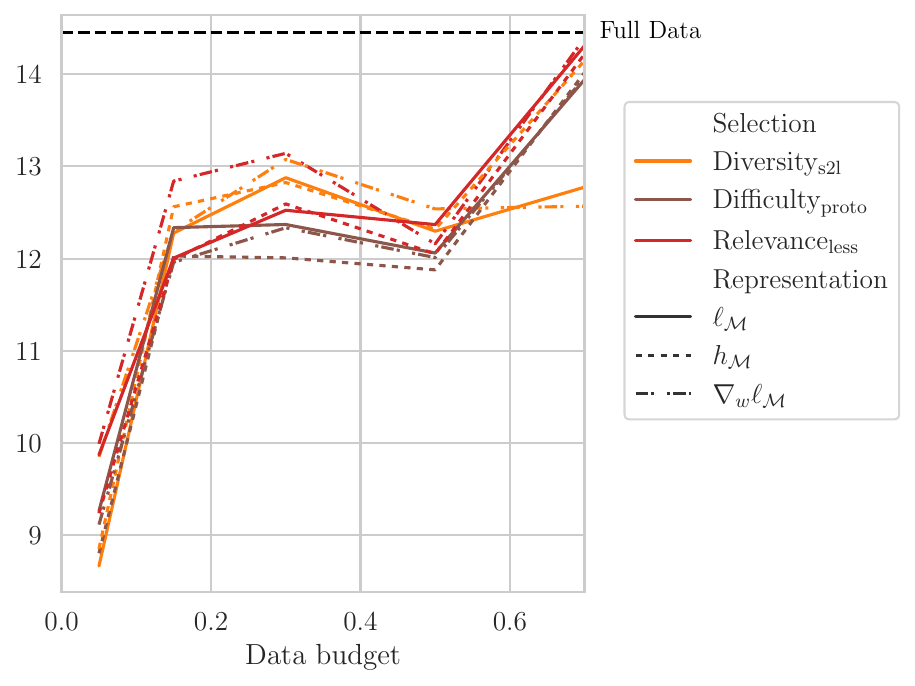}
        \caption{OPT-125M on DialogSum (Rouge-2)}
        \label{fig:sampling-feature-dialogsum-opt-125m-rouge-2}
    \end{subfigure}
    \par\bigskip
    \begin{subfigure}[t]{0.32\textwidth}
        \centering
        \includegraphics[width=\linewidth]{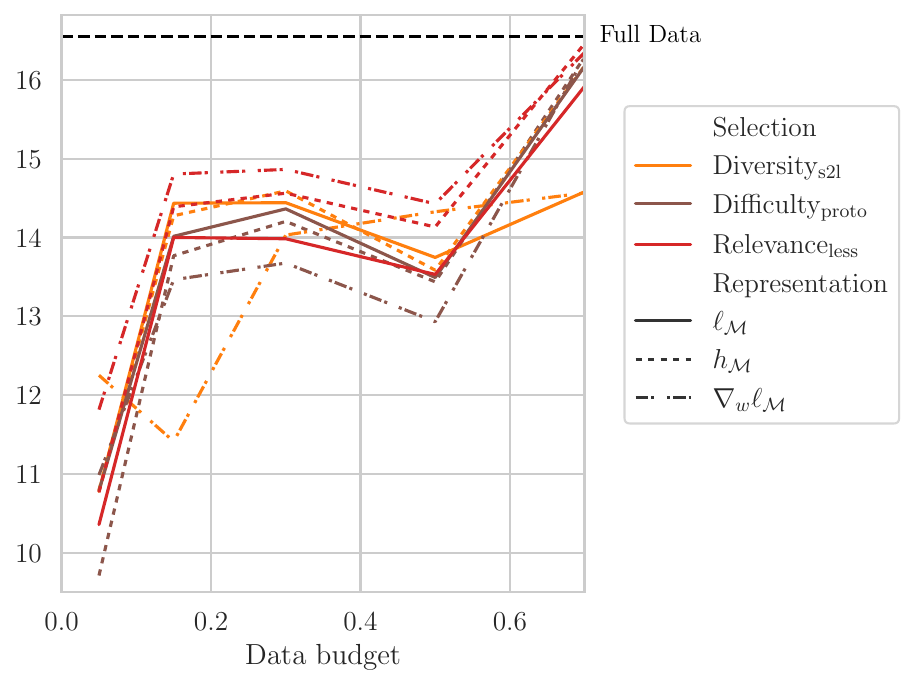}
        \caption{OPT-350M on DialogSum (Rouge-2)}
        \label{fig:sampling-feature-dialogsum-opt-350m-rouge-2}
    \end{subfigure}
    \hfill
    \caption{
        Ablation studies on using different data representations with the same selection algorithm.
    }
    \label{fig:sampling-feature-add}

\end{figure*}
\FloatBarrier

\end{document}